%% file: main.tex
\documentclass[sigconf]{acmart}
\input{macro}

\AtBeginDocument{%
  }

\copyrightyear{2026}
\acmYear{2026}
\setcopyright{cc}
\setcctype{by}
\acmConference[KDD '26]{Proceedings of the 32nd ACM SIGKDD Conference on Knowledge Discovery and Data Mining V.2}{August 09--13, 2026}{Jeju Island, Republic of Korea}
\acmBooktitle{Proceedings of the 32nd ACM SIGKDD Conference on Knowledge Discovery and Data Mining V.2 (KDD '26), August 09--13, 2026, Jeju Island, Republic of Korea}
\acmDOI{10.1145/3770855.3817658}
\acmISBN{979-8-4007-2259-2/2026/08}

\begin{document}

\title{NBQ: Next-Best-Question for Dynamic Profiling}


\author{Yimin Shi}
\email{yiminshi@comp.nus.edu.sg}
\affiliation{%
  \institution{National University of Singapore}
  \city{Singapore}
  \country{Singapore}
}

\author{Clarice Wang}
\email{clarice7@seas.upenn.edu}
\affiliation{%
  \institution{University of Pennsylvania}
  \city{Philadelphia}
  \state{PA}
  \country{USA}
}

\author{Haixun Wang}
\email{haixun@gmail.com}
\affiliation{%
  \institution{EvenUp}
  \city{San Francisco}
  \state{CA}
  \country{USA}
}

\author{Xiaokui Xiao}
\email{xkxiao@nus.edu.sg}
\affiliation{%
  \institution{National University of Singapore}
  \city{Singapore}
  \country{Singapore}
}

\renewcommand{\shortauthors}{Yimin Shi, Clarice Wang, Haixun Wang, and Xiaokui Xiao}

    
    
    
    
    
    
    
    
    


\begin{abstract}
Many real-world conversational settings for knowledge discovery, including podcasts, hiring screens, and marketplaces, need a purpose-driven understanding of a person. For this, the most effective path is to ask the Next Best Question (NBQ) at each turn: the question with the highest expected information gain given what’s already been learned and the conversation’s goal. We propose the \ours framework, a plug-and-play smart asker that, given any topic, seeds a diverse pool of hundreds to thousands of candidate questions, maintains a compact, continuously updated user state that tracks coverage and confidence, selects the next question after every answer to maximize incremental value within the pre-specified turn budget,
and finally distills the unstructured Q\&A dialogue into a structured, vector-based user profile ready for downstream mining tasks.
As a demanding application, we instantiate \ours for \emph{reciprocal matchmaking}, where ``reciprocal'' means compatibility must be mutual,
\ie a match of two people is valid only when one fits the other’s preferences, and vice versa.
Therefore, each person is modeled with two representations: (i) who they are (self-description) and (ii) whom they prefer (counterpart preferences). \ours asks questions to refine both vectors and thereby increase the match success probability.
To scale matching to real-world platforms with millions of concurrent users and continuously updated profiles, we introduce \ourmatch, an efficient retrieval layer that recasts reciprocal matching from quadratic pairwise scoring to approximate vector search. With modest storage overhead, \ourmatch updates each user’s top matches in real time.
Compared with random or conventional generative questioning baselines, \ours improves the quality of user profiling by up to 13.6\% and 14.0\% in terms of AC@T and AR@T. Meanwhile, \ourmatch accelerates retrieval by up to 22.9$\times$ while maintaining a high recall of up to 0.989.

\end{abstract}


\begin{CCSXML}
<ccs2012>
   <concept>
       <concept_id>10002951.10003260.10003261.10003271</concept_id>
       <concept_desc>Information systems~Personalization</concept_desc>
       <concept_significance>500</concept_significance>
       </concept>
   <concept>
       <concept_id>10002951.10003317.10003347.10003350</concept_id>
       <concept_desc>Information systems~Recommender systems</concept_desc>
       <concept_significance>500</concept_significance>
       </concept>
   <concept>
       <concept_id>10002951.10003317</concept_id>
       <concept_desc>Information systems~Information retrieval</concept_desc>
       <concept_significance>300</concept_significance>
       </concept>
   <concept>
       <concept_id>10010147.10010178.10010179</concept_id>
       <concept_desc>Computing methodologies~Natural language processing</concept_desc>
       <concept_significance>300</concept_significance>
       </concept>
 </ccs2012>
\end{CCSXML}

\ccsdesc[500]{Information systems~Personalization}
\ccsdesc[500]{Information systems~Recommender systems}
\ccsdesc[300]{Information systems~Information retrieval}
\ccsdesc[300]{Computing methodologies~Natural language processing}

\keywords{User Profiling, Reciprocal Recommendation, Large Language Models, Scalable Retrieval}



\maketitle
\newcommand\kddavailabilityurl{https://doi.org/10.5281/zenodo.20504410}
\ifdefempty{\kddavailabilityurl}{}{
\begingroup\small\noindent\raggedright\textbf{Resource Availability:}\\
The source code of this paper has been made publicly available at \url{\kddavailabilityurl}.
\endgroup
}

\input{sections/nbq-problem}
\input{sections/agent}
\input{sections/matching}

\input{sections/exp}
\input{sections/related}
\input{sections/conclusion}

\begin{acks}
This work was supported by the National Research Foundation, Singapore
and Infocomm Media Development Authority under its Trust Tech Funding
Initiative.
\end{acks}

\bibliographystyle{ACM-Reference-Format}
\balance
\bibliography{ref}

\input{sections/appendix}

\end{document}

%% file: macro.tex
\usepackage{graphicx}
\usepackage[ruled,vlined,linesnumbered]{algorithm2e}
\usepackage[noend]{algpseudocode}
\usepackage{amsmath}
\usepackage{xcolor}
\usepackage{xspace}
\usepackage{enumitem}
\usepackage{multirow}
\usepackage{makecell}
\usepackage{tabularx}
\usepackage{balance}
\hypersetup{
  colorlinks = true,
  linkcolor = black,
  citecolor = black,
  urlcolor  = blue
}

\usepackage{tikz}
\usepackage{pgfplots}
\pgfplotsset{compat=1.18}
\usepgfplotslibrary{groupplots}
\pgfplotsset{compat=1.18}
\usetikzlibrary{plotmarks}

\usepackage{adjustbox}

\newtheorem{theorem}{Theorem}

\newcommand{\ie}{{i.e.},\xspace}
\newcommand{\eg}{{e.g.},\xspace}

\newcommand{\stitle}[1]{\noindent{\bf #1.\/}}
\newcommand{\revise}[1]{#1}
\newcommand{\revisecolor}{\color{black}}

\newcommand{\todo}[1]{\textcolor{blue}{[\textbf{TODO:} #1]}}
\newcommand{\yimin}[1]{\textcolor{blue}{[\textbf{Yimin}: #1]}}

\newcommand{\ours}{\texttt{NBQ}\xspace}
\newcommand{\randask}{\texttt{RandAsk}\xspace}
\newcommand{\oursub}{\texttt{SubdimDiv}\xspace}

\newcommand{\ourmatch}{\texttt{QuickMatch}\xspace}
\newcommand{\man}{\textbf{ROMANTIC-MAN}}
\newcommand{\shortman}{\textbf{ROM-M}}
\newcommand{\woman}{\textbf{ROMANTIC-WOMAN}}
\newcommand{\shortwoman}{\textbf{ROM-W}}
\newcommand{\simplelsh}{\texttt{SIMPLE-LSH}\xspace}

\newcommand{\scenario}{\alpha \xspace}
\newcommand{\rel}{r}
\newcommand{\eat}[1]{} 

\newcommand{\haixun}[1]{\textcolor{red}{[\textbf{Haixun:} #1]}}

%% file: sections/nbq-problem.tex
\section{Introduction}

Interviews {for conversational knowledge discovery} are pervasive in communication: a podcast host probing for untold stories; a recruiter estimating job fit under a tight turn budget; a clinician triaging symptoms; a sales rep running discovery; a tutor diagnosing misconceptions; a journalist verifying claims; a UX researcher mapping needs; a concierge teasing out preferences. These workflows succeed or stall on the same fulcrum: \emph{what we ask next}.
At each turn of a conversation, given partial answers, time constraints, and a concrete purpose, the interviewer must choose the \emph{Next Best Question} (NBQ) {for profiling}. This is inherently a \emph{sequential} and \emph{ambiguous} decision problem: we adapt a sequence of questions, one step at a time, using what we have learned so far to decide where to probe next. Unlike answer-oriented assistants that primarily react, an NBQ agent \emph{leads}, steering the dialogue toward a goal.

A conventional questionnaire-based solution
struggles because
user answers can be inherently open, random, incomplete, or even off-topic. With limited user patience and a tight turn budget, static forms can waste turns on questions with lower values and miss chances to resolve the real uncertainty. Therefore, the interviewer agent needs an adaptive questioning policy that selects the next question that is expected to maximize the terminal utility under a strict interview budget.

\revise{Existing approaches, however, fall short of this goal in different ways.}
Prior conversation planning methods learn dialogue policies that optimize long-horizon conversational task rewards.
For example, HiCR~\cite{www2023hicr} learns a hierarchical policy for a conversational recommender system and SCOPE~\cite{chen2025scope} learns Markov decision process based on quantifiable metrics.
However, these methods rely on task-specific supervision signals that are often hard to obtain in practice, especially when the interview target itself is qualitative and ambiguous.
Recent user-elicitation methods prompt Large Language Models (LLMs) to ask questions under descriptive requirements. Specifically, MockLLM~\cite{sun2025mockllm} and GATE~\cite{lieliciting} specify constraints such as avoiding repetition and favoring questions that reveal the most unknown information. Although these methods pursue a related objective, they do not explicitly model how each question and user answer contribute to the conversation target, nor do they assume a dialogue budget. Moreover, they do not provide a clear path to convert the free-form Q\&A dialogue into an operable representation for downstream use.
Taken together, there is a gap: we lack a purpose-driven, budget-aware, and plug-and-play adaptive questioning interviewer that explicitly models uncertainty in partial and stochastic user answers, reasons about the relevance between historical answers and future questions, and produces structured dialogue representations for downstream applications.

This gap motivates our design of \ours, an \eat{\emph{AI interviewer}}\emph{AI-driven interviewing framework} that \emph{leads} the conversation. Unlike customer-support or QA assistants that react to human queries, \ours proactively decides what to ask next to drive the dialogue toward a declared purpose. For a given topic or interviewee and an objective $\alpha$, \ours: (i) synthesizes a comprehensive, diverse pool of candidate questions tailored to the topic/person; (ii) maintains a compact, continuously updated state that tracks both \emph{coverage} (what has been asked or implied) and \emph{confidence} (how well we know it); (iii) selects, at each turn within the budget, the next question that maximizes expected \emph{incremental value} toward $\alpha$ given the dialogue so far;
and (iv) distills the conversation into a structured profile (\eg a vector) for downstream use.
{\ours is designed to be domain-agnostic and plug-and-play, i.e., it can work with any topic by instantiating the above components.}

As a demanding application, we focus on \emph{reciprocal matchmaking}, where success requires \emph{mutual} satisfaction: who a person \emph{is} must align with what another person \emph{seeks}, and vice versa. This setting surfaces three key design challenges:

\stitle{Purpose-driven questioning} The value of a question is relative to the interview’s goal
(\eg  predictive validity in hiring, personality alignment in dating).
\ours must evaluate the \emph{expected gain} of each candidate question against that goal, given the conversation so far.

\stitle{Representation with uncertainty}
Open answers from the respondent can carry overlapping signals; one response can inform multiple dimensions. \ours must propagate credit to correlated attributes, track residual uncertainty, and favor questions that shrink uncertainty where it most affects the objective.

\stitle{Real-time, large-scale retrieval} Each answer should immediately sharpen both self and preference profiles and refresh mutual-compatible candidates fast enough to keep the interview fluid, even at population scale (hundreds of thousands to millions of users).

In this paper, we formalize NBQ as a sequential, purpose-driven interview design problem, beginning with a precise \emph{problem statement}.
We then describe the framework of our solution in Section~\ref{sec:overview} and leave the detailed algorithm discussion in Section~\ref{sec:adaptive}.
We then instantiate the framework for \emph{reciprocal matchmaking} in Section~\ref{sec:match}, introducing a user representation method which distills free-form dialogue to vectors, and a mutual-compatibility objective that proactively considers the bidirectional satisfaction.
{To avoid quadratic pairwise matching, we propose \ourmatch, which relaxes reciprocal compatibility into a tight pre-filtering condition that is implementable via approximate vector search, followed by an exact final check.}
Notably, compared to random and user-elicitation baselines, \ours can consistently improve profiling quality with respect to AC@T and AR@T by up to 13.6\% and 14.0\%, respectively. Meanwhile, \ourmatch accelerates candidate retrieval by up to 22.9$\times$ while maintaining a high recall of up to 0.989, confirming the framework's effectiveness and efficiency.

To summarize, we make the following main contributions:
\begin{itemize}[topsep=2pt,itemsep=1pt,parsep=0pt,partopsep=0pt,leftmargin=11pt]
    \item We propose \ours, which generates a comprehensive question set, and adaptively selects the next-best-question to maximize the expected incremental information gain.
    \item We present a transformation-based, task-agnostic user representation that distills profiles from unstructured dialogue.
    \item We propose \ourmatch, which scales reciprocal matching via efficient pre-filtering using approximate vector range search.
    \item We conduct experiments across scenarios and perspectives to demonstrate the effectiveness and efficiency of key components.
\end{itemize}
All source codes, prompts, and datasets are available at: \textcolor{blue}{\url{https://github.com/Hanc1999/Next-Best-Question}}.

\begin{figure*}[t]
    \centering
    \includegraphics[width=0.98\textwidth]{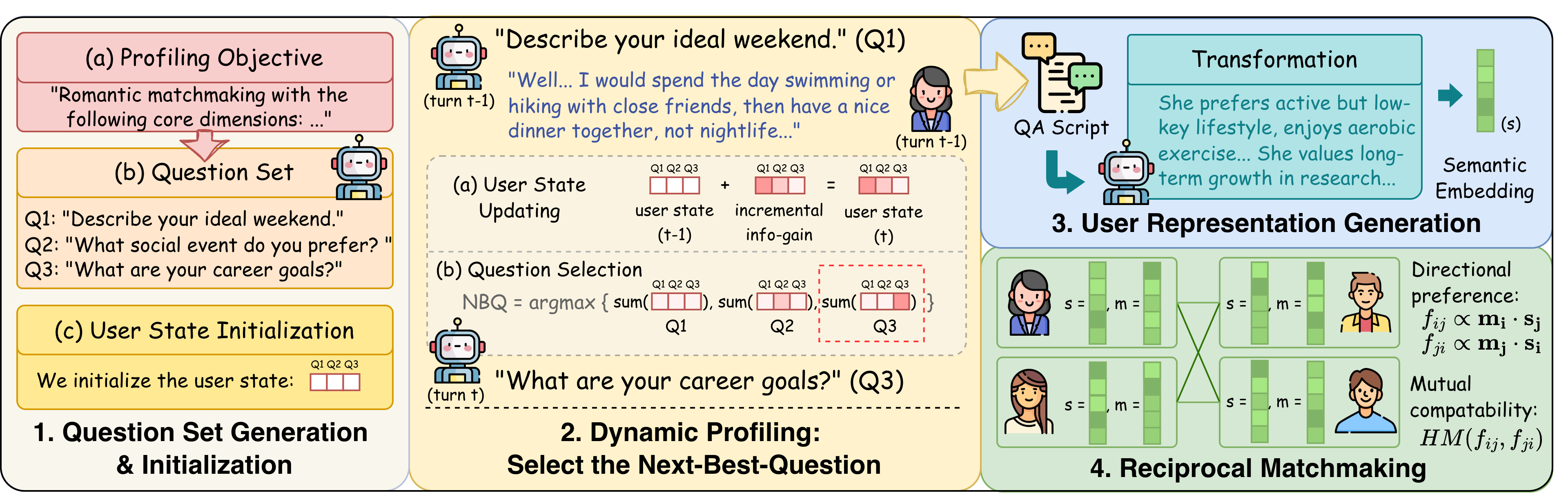}
    \caption{\revise{\ours workflow: question-pool construction, adaptive profiling, user representation, and reciprocal matching.}}
    \Description{Workflow diagram of NBQ showing question-pool construction, adaptive question selection with answer analysis and user-state updates, dialogue-to-profile transformation, and QuickMatch-based reciprocal matching.}
    \label{fig:kdd-workflow}
\end{figure*}

\section{Problem Statement}\label{sec:problem}

We formalize three core problems that the paper addresses: \emph{question-pool construction}, \emph{adaptive next-question selection}, and for the matchmaking application, 
\emph{reciprocal profiling and scalable match retrieval}.

\stitle{Question-Pool Construction}
Given a topic or scenario $\alpha$ (e.g., hiring, podcasting, clinical intake), the problem of question-pool construction asks for a set of candidate questions
$\mathcal{Q}(\alpha)=\{q_1,\ldots,q_n\}$
that covers a set of relevant sub-dimensions \eat{$\mathcal{D}$} of scenario $\alpha$. We require the set $\mathcal{Q}(\alpha)$ to satisfy certain predefined metrics such as coverage, diversity, and quality. 
These metrics are intended to be computable from question and answer representations.

\stitle{Adaptive Next-Best Question Selection}
An interview with respondent $u$ proceeds for at most $T$ turns. After $t \le T$ turns, the dialogue prefix is
$D_t=\{(q_1,a_1),\ldots,(q_t,a_t)\}.$
Let $J(D_t;\alpha)\in\mathbb{R}$ be an abstract utility function that scores the usefulness of the conversation so far for a topic $\alpha$ (e.g., narrative value, decision accuracy, risk reduction).
The problem here is to choose the next best question that is expected to maximize terminal utility under a strict turn budget $T$:
$
q_{t+1} = \arg\max_{q\in \mathcal{Q}} \mathbb{E}_q [J(D_T; \alpha)],\label{eq:global}
$
where the expectation is over the answer randomness.

\stitle{Efficient Reciprocal Matchmaking}
In reciprocal matching, each user $u$ is profiled along two axes: a \emph{self-description} ($s$) representation describing who they are, and a \emph{match-preference} ($m$) representation describing what they seek in a partner. Let $s$ and $m$ be their representations distilled from the dialogue $D$, we define a \emph{mutual compatibility} function 
$g: ({s}_i, {m}_i, {s}_j, {m}_j) \mapsto [0, 1],$
which takes the dual representations of two users $u_i$ and $u_j$ and returns a normalized matching score. The function $g$ is required to be \emph{reciprocal} (i.e., $g(u_i,u_j)=g(u_j,u_i))$, \emph{bounded} (i.e., $g \in[0,1]$), and \emph{monotone} in alignment.
Our objective is to maintain and expose, for each user $u_i$, the set of top-matching candidates which maximizes the overall utility
$
\textstyle \sum_{i \in \mathcal{U}}~\sum_{j \in \mathrm{TopK}(i)} g(u_i, u_j)\,
$
in \emph{real time} as answers arrive and user profiles update. The system must satisfy three key constraints: 
(i) \textit{streaming updates}, the representations $s$ and $m$ are refreshed immediately upon each new $(q,a)$ pair; (ii) \textit{low latency}, the ranked set $\mathrm{TopK}_t(i)$ must be refreshed within a fixed time bound, independent of the total user population; and (iii) \textit{scalability}, both space and computation per refresh should remain sublinear in $|\mathcal{U}|$. The specific design of $g$ and the underlying retrieval structures are detailed in subsequent sections.
{
These requirements pose great challenges for data management and determine whether the delivered system is scalable enough to operate in real-world environments with millions of users online at the same time.
}

%% file: sections/agent.tex
\section{Method Overview}\label{sec:overview}

\stitle{Dynamic Question Selection}
\revise{We use $\alpha$ to denote the \emph{conversational profiling objective}, \ie the topic or goal that drives the interview (\eg professional experience in hiring, partner expectation in matchmaking).}
We first define a universal candidate question pool $\mathcal{Q}(\alpha)$\revise{, the set of candidate questions tailored to $\alpha$,} that is shared across all users within the same scenario $\alpha$. In practice, the pool contains approximately hundreds to thousands of unique and diverse questions, carefully designed to cover all relevant information we aim to collect from users.
To quantify how well the system understands each user at any point during the conversation, we introduce the concept of a \textit{user state}, formally defined as a vector $\mathbf{s} \in \mathbb{R}^{|\mathcal{Q}|}$. Each entry $\mathbf{s}[i] \in [0,1]$ corresponds to the degree of completeness, or confidence, that question $q_i$ has been addressed. Initially, when no questions have been asked, we know nothing about the user; hence, all entries of $s$ are initialized to zero.
\eat{\haixun{Should it be zero, or should it be the ``default'' person value? But it's more difficult to decide what is the ``default'' value. We can think about this more.}} 
In an ideal scenario, where a user responds accurately to every question in $\mathcal{Q}$, all entries would approach one, indicating a complete understanding of the user.
However, in practice, it is unrealistic to ask every question due to constraints on conversation time and user patience. Therefore, we limit the turn budget as $T \ll |\mathcal{Q}|$, requiring the agent to strategically select which questions to pose within these $T$ turns to maximize the total \textit{information gain}, 
the overall conversation utility,
defined as the sum of the entries in the user state vector:
$\mathrm{\mathrm{IG}} = \sum_{i=1}^{|\mathcal{Q}|} \mathbf{s}[i].$

A critical insight guiding our design is that questions in $\mathcal{Q}$ often exhibit correlations; that is, answers provided by a user may simultaneously address multiple related questions to varying degrees. For instance, answers to \textit{``Describe your ideal weekend,''} often help to reveal the respondent's \textit{social preferences} and \textit{lifestyle choices}, which also inform correlated questions such as ``\textit{Who do you like to be with?}'' and ``\textit{Do you like being outdoors?}''.
Leveraging this, we not only update the user state entry corresponding to the asked question but also incrementally update the entries of other associated questions.
However, when answering the same question, different users will respond from varying perspectives shaped by their unique personal experiences and values.
As a result, it is infeasible to pre-select a globally optimal subset of $T$ questions; we need to adaptively select the next best question at each turn from $\mathcal{Q}$ based on the user’s individual responses, dynamically adjusting our questioning strategy to maximize the overall conversation utility.

To address this challenge, we propose a greedy adaptive question selection algorithm, denoted as \ours. At each conversation turn $t$, it selects the next best question $q_{t+1}^* \in \mathcal{Q}$ that is expected to yield the largest incremental information gain:
\begin{equation}
\begin{aligned}
q_{t+1}^* &= \arg\max_{q \in \mathcal{Q}} \mathbb{E}_{q} \left[ \mathrm{IG}_{t+1} \right] - \mathrm{IG}_t \\
&=\arg\max_{q \in \mathcal{Q}} \mathbf{1}^\top(\mathbb{E}_q[\mathbf{s}_{t+1}] - \mathbf{s}_t)
\end{aligned}\label{eq:max}
\end{equation}

\eat{\haixun{This following section needs a better structure. While titled ``reciprocal matching'', it talks about too many things, including representation, matching function, transforming D to text then to represetnation and user uncertainty. But this is an overview section. We can leave it here for now, and working on section 4 and beyond first.}}
\stitle{Reciprocal Matching}
One of the most demanding downstream applications of conversational profiling is reciprocal matching, where the system needs to collect a user's profile along two axes: (i) self-description and (ii) preferences towards counterparts.
To achieve this, we construct two corresponding question sets and apply \ours for two $T$-turn adaptive profiling conversations. 
To leverage the resulting dialogues for matching while filtering out noise (\eg overly niche details, specific expression habits, response length), we transform each unstructured Q\&A transcript into a readable, concise, high-level user representation.
Formally, we represent each user as a structured pair $(s, m)$, where $s$ summarizes her self-characteristics and $m$ captures her desired traits of the potential counterpart. Both $s$ and $m$ preserve the semantic richness and are further encoded into a shared embedding space for better actionability.

The goal of the reciprocal matching system is to identify user pairs with high mutual compatibility. We present a symmetric compatibility function $g$ that proactively considers the bidirectionality of candidate satisfaction and penalizes the imbalanced situations.
Formally, given two users $u_i$ and $u_j$, we define a satisfaction function $f_{ij}=f(m_i, s_j)$ to measure how well user $u_j$'s characteristics align with user $u_i$'s preferences, vice versa, $f_{ji}=f(m_j, s_i)$ measures the satisfaction in the opposite direction. A successful match should maximize both of these scores.
\eat{\haixun{Do we assume each question/answer has the same importance to represent a person? Usually it's not. For example, one very important requirement for x is that his match must be super rich. How do we model this?}}
\eat{\yimin{Yes, I think it is realistic for users to assign different weights to various dimensions, and we need to consider this during matching. The current user representation does not explicitly consider this.}} 
\section{Adaptive Profiling}\label{sec:adaptive}
In this section, we present the design details of \ours, the adaptive profiling module based on relevance analysis, which operates through the following four main steps: (i) recursively construct the subdimension set $\mathcal{D}(\alpha)$ for scenario $\alpha$ and generate the question set $\mathcal{Q}$ correspondingly; (ii) for every pair of questions, pre-compute their relevance $\rel(q_i,q_j)$;
(iii) at each turn $t$, given a new answer $a_t$, compute its relevance to the historical answers $A_{t-1}$ and to all questions in $\mathcal{Q}$, and update the user state $\mathbf{s}$;
(iv) select, for the next turn, the question that maximizes the expected information gain.
\subsection{Question Set Generation}\label{sec:qgen}
The question set comprises two subsets: $\mathcal{Q}=\{\mathcal{Q}_s, \mathcal{Q}_m\}$, where $\mathcal{Q}_s$ targets the user's self-information and $\mathcal{Q}_m$ focuses on her match preference.
To ensure that the question set is sufficiently comprehensive to cover all topics relevant to the subsequent reciprocal recommendation scenario $\alpha$, we first recursively construct a subdimension tree based on a small seed set of \textit{root dimensions} $\mathcal{D}_{root}$.

\stitle{\oursub} We propose the \oursub algorithm to construct the subdimension tree $\mathcal{D}$ through an iterative top-down process. Starting from the seed root dimension layer $\mathcal{D}_{0}$, we expand the set over $L$ iterations. At each iteration $l$, every subdimension in the previous layer $\mathcal{D}_{l-1}$ is further decomposed into children subdimensions. More specifically, conditioned on the given application scenario $\scenario$, we leverage LLMs to generate $k_{\text{min}}$ to $k_{\text{max}}$ new subdimensions with finer granularity and approximately equal importance. We require the division to be natural, and the resulting subdimensions should well cover the semantic scope of their parent. This recursive expansion process naturally introduces different levels of granularity and supports hierarchical understanding across dimensions. After processing the last layer, the resulting new subdimensions compose the new subdimension tree layer $\mathcal{D}_l$. The algorithm returns the union of all these layers as the result,
\ie $\mathcal{D}=\{\mathcal{D}_0, \ldots, \mathcal{D}_L\}$.
Pseudocode and prompts are provided in Appendix~\ref{app:alg} and~\ref{app:prompts:1}.

\subsection{Relevance Prediction}\label{subsec:relpred}
A core insight in the design of \ours is that user responses often carry overlapping content relevant to multiple questions. To exploit this, we introduce \textit{relevance}, which quantifies how much the answer to one question can help infer information about another.
Formally, for any pair of questions $(q_i, q_j) \in \mathcal{Q}\times \mathcal{Q}$, we define their relevance $\rel(q_i, q_j) \in [0,1]$ as the degree of content overlap between the user responses they are expected to elicit.
Similarly, we define \(\rel(q_i, a_j)\) and \(\rel(a_i, a_j)\) as the relevance between questions and answers, and between answers themselves, respectively.
For simplicity of presentation, we focus on the question--question relevance prediction, with other forms following the same pipeline.

Recent works have shown that LLMs can effectively judge the relevance between a user query and retrieved content~\cite{thomas2024large, farzi2025criteria}.
Motivated by this, we leverage LLMs to reason and estimate the relevance between questions.
Following~\citet{xiong2023can} and \citet{tian2023just}, we define six distinct relevance levels, each followed by a descriptive definition and a quantized overlap rate.
As the output, the LLM is instructed to generate a certainty distribution over these levels that sums to 1.
We then convert this distribution into a single continuous relevance score by taking a certainty-weighted average across all six levels.
We provide full details in Appendix~\ref{app:prompts:3}.

However, directly applying LLMs to score relevance for every pair of questions is impractical. The number of pairwise comparisons grows quadratically with the number of questions, leading to unaffordable computational costs.
To address this, we propose a lightweight relevance predictor: it first analyzes the answer content each question is expected to elicit, and then learns a mapping from the similarity between the two analyses to a relevance score.
Specifically, for each question \(q \in \mathcal{Q}\), we use LLMs to predict the \emph{answer elements} that are most likely to appear in a user’s response, denoted by $\hat{a}$.
For example, for \textit{``Describe your ideal weekend,''} an LLM may predict answer elements such as \emph{social relationships}, \emph{weekend lifestyle}, and \emph{entertainment activities}.
We carefully design the prompt so that the predicted elements are \textit{non-overlapping}, \textit{balanced}, \textit{complete}, and \textit{self-contained}.
Prompts provided in Appendix~\ref{app:prompts:4}.
Next, we assume and learn a monotone mapping between the relevance score and the answer analysis similarity.
We construct the calibration dataset by uniformly sampling question pairs from $\mathcal{Q}$, and
for each pair $(q_i,q_j)$, we obtain both the relevance rating $\hat{r}_{ij}$ and the analysis similarity score
$\bar{r}_{ij}=\cos(\mathbf{\hat{a}}_i,\mathbf{\hat{a}}_j)$, which is the cosine similarity between the semantic embeddings of ${\hat{a}}_i$ and ${\hat{a}}_j$. 
To reduce variance caused by LLM stochasticity, we repeat both computations three times and average the results.
We then fit the following polynomial curve,
$f(\bar{r}) = \sum_{k=0}^{d} w_k \, \bar{r}^k$,
optimizing parameters $\mathbf{w}$ to minimize the squared error between $f(\bar{r_{ij}})$ and $\hat{r_{ij}}$ over all sampled question pairs in the calibration set.
At inference time, we predict the relevance between two questions $(q_i,q_j)$ as
$$
r_{ij} = \operatorname{clip}_{[0,1]}\!\left(f\left(\bar{r}_{ij}\right)\right) = \min\{1,\max\{0,f\left(\bar{r}_{ij}\right)\}\}.
$$
Since pairwise question relevance is used in the question selection process and does not depend on the respondent, we precompute $\mathbf{R} \in \mathbb{R}^{|\mathcal{Q}|\times|\mathcal{Q}|}$ with $\mathbf{R}[i,j]=\rel_{ij}$.
This shifts the main cost of \ours to an offline stage and amortizes it across all users.
\revise{We empirically validate this learned mapping in Section~\ref{sec:rq5-components}.}

\subsection{User State Updating}
The question state dynamically reflects the extent to which a question can be addressed based on the user’s current responses.
Initially, since we know nothing about the respondent, we set all the questions' scores to zero, \ie $\mathbf{s}_0[i]=0, \forall q_i\in \mathcal{Q}$.
At turn $t$, after receiving a new user response $a_t$, we compute the incremental information gain it contributes to each question $q_i$ as:
{$$\delta(a_t \mid q_i, A_{t-1})=\rel(a_t, q_i) -\rel\left(a_t, A_{t-1} \mid q_i\right),$$}
where $\rel(a_t, q_i)$ denotes the relevance between the answer and the question, and $\rel(a_t, A_{t-1} \mid q_i)$ represents the relevance between $a_t$ and the existing answer set $A_{t-1}$ with respect to $q_i$. 
Intuitively, a larger $\rel(a_t, q_i)$ means that $a_t$ is more informative for answering $q_i$, thus yielding higher incremental information gain.
The second term acts as a redundancy penalty: it deducts the portion of $a_t$’s contribution with respect to $q_i$ that is already covered by earlier conversation turns.
More specifically, to compute $\rel(a_t,q_i)$, we follow the same procedure as for question–question relevance. We first ask the LLM to summarize the main content elements included in $a_t$ into a text $\hat{a}_t$, and encode it into a semantic embedding $\mathbf{\hat{a}}_t$. We then map the cosine similarity between $\mathbf{\hat{a}}_t$ and $\mathbf{\hat{a}}_i$ through the calibration function $f$ and clip the result to $[0,1]$:
$
\rel(a_t,q_i)=\operatorname{clip}_{[0,1]}\!\left(f\left(\cos\left(\mathbf{\hat{a}}_t, \mathbf{\hat{a}}_i\right)\right)\right).
$

When calculating $\rel(a_t, A_{t-1} \mid q_i)$, a naive approach is to compute the average relevance between $a_t$ and each past answer $a_j, \forall a_j \in A_{t-1}$. However, this ignores the fact that different past answers may contribute unequally to the question $q_i$. To address this, we define the relevance between $a_t$ and the existing question-specific knowledge regarding $q_i$ as the following weighted sum:
$$
\rel(a_t, A_{t-1} \mid q_i)= \textstyle \sum\limits_{j=1}^{t-1} \left(\delta_i(a_j) \cdot \rel(a_t, a_j)\right),
$$
where $\delta_i(a_j)$ denotes the incremental information gain that the historical answer $a_j$ has contributed to question $q_i$, which is actually the previous redundancy-adjusted relevance between them. The term $\rel(a_t,a_j)$ denotes the relevance between the current answer $a_t$ and the previous answer $a_j$.
The relevance between two user answers, $\rel(a_t, a_j)$, is computed and calibrated in the same way as question–question and answer–question relevance:
$
\rel(a_t,a_j)=\operatorname{clip}_{[0,1]}\!\left(f\left(\cos\left(\mathbf{\hat a}_t, \mathbf{\hat a}_j\right)\right)\right).
$
Intuitively, this relevance measurement pays more attention to past answers that are important to the question $q_i$ while downweighting those with limited effect.

\subsection{Question selection}
When selecting the next question to ask, \ours applies a greedy policy that maximizes the expected overall information gain provided by the user’s next response. The optimization problem described in Eq.~\eqref{eq:max} is instantiated as follows:
\begin{align*}
q^*_{t+1} &= \arg\max_{q_i \in \mathcal{Q}} \mathbb{E}_{q_i} \left[ \sum_{q_j \in \mathcal{Q}} \delta(a_{t+1} \mid q_j, A_{t}) \right] \\
         &= \arg\max_{q_i \in \mathcal{Q}} \sum_{q_j \in \mathcal{Q}} \delta(\hat{a}_i \mid q_j, A_{t}),
\end{align*}
where we approximate the unknown next response $a_{t+1}$ with the predicted answer elements $\hat{a}_i$ for candidate question $q_i$.

We next formulate the incremental information gain computation for all question candidates in matrix form.
Recall that we pre-compute a symmetric relevance matrix $\mathbf{R}$ over questions. We use $\mathbf{R}$ to estimate the relevance between the next answer to each candidate question and all other questions.
Let $\mathbf{R}_a \in \mathbb{R}^{T \times |\mathcal{Q}|}$ denote the relevance between the $T$ historical answers and all candidate questions, where $\mathbf{R}_a[t,i]=\rel(a_t, q_i)$.
Let $\mathbf{\Delta} \in \mathbb{R}^{T \times |\mathcal{Q}|}$ denote the incremental information gain matrix between all historical answers and all questions, \ie $\mathbf{\Delta}[t,i] = \delta_j(a_t) = \delta(a_t | q_i, A_{t-1})$.
{
We then select the next question by maximizing the following objective function over all candidates:
$$
q^*_{t+1}
=\arg\max_{q_i\in \mathcal{Q}}
\left(\big(\mathbf{R}-\mathbf{R}_a^\top\mathbf{\Delta}\big)\mathbf{1}\right)[i],
$$
where $\mathbf{1}\in\mathbb{R}^{|\mathcal{Q}|}$ is the all-ones column vector, it sums each row of $\mathbf{R}-\mathbf{R}_a^\top\mathbf{\Delta}$, yielding for each candidate question the expected incremental information gain aggregated over all questions.

\eat{
\stitle{Complexity analysis}
The complexity analysis of the above computation involves two main steps. Computing the matrix product $\mathbf{R}_a^\top \tilde{\delta}$ incurs a complexity of $\mathcal{O}(|\mathcal{Q}| \times T \times |\mathcal{Q}|) = \mathcal{O}(|\mathcal{Q}|^2 T)$. Subsequently, performing the element-wise multiplication with $\mathbf{R}_q$ requires $\mathcal{O}(|\mathcal{Q}|^2)$. Thus, the overall computational complexity is dominated by the first step and can be expressed as:
$\mathcal{O}(|\mathcal{Q}|^2 T)$
}
\eat{
\subsection{Approximation Approach}
\todo{considering to remove this part since it has not been fully analyzed and not implemented yet}
The computational cost of updating question states primarily comes from estimating how each candidate question may affect all others, which involves a $|\mathcal{Q}| \times |\mathcal{Q}|$ computation.
To address this, we leverage \simplelsh~\cite{neyshabur2015symmetric} to group questions into $K$ buckets based on angular similarity in their analysis embeddings, such that questions targeting conceptually related aspects fall into the same bucket. Each bucket is then represented by a centroid embedding.

Instead of estimating a question’s expected effect on every individual question, we approximate its effect on each bucket. This reduces the original computation times from $|\mathcal{Q}|^2$ to $|\mathcal{Q}|K$, where $K \ll |\mathcal{Q}|$, significantly improving efficiency. We describe the details of \simplelsh and Bucket-based Approximation as follows.

\stitle{\simplelsh}  
We adopt \simplelsh to partition questions based on angular similarity in their normalized embeddings. Specifically, we sample $k$ random hyperplanes with normal vectors $\{r_m\}_{m=1}^k$, and compute the binary hash code for each question embedding $\mathbf{\hat{a}}$ as:
\begin{equation}
h(e_i^{\hat{a}}) = \left( \mathbb{I}[r_1^\top e_i^{\hat{a}} \ge 0], \dots, \mathbb{I}[r_k^\top e_i^{\hat{a}} \ge 0] \right),
\end{equation}
where $\mathbb{I}[\cdot]$ denotes the indicator function.
This hashing process assigns each question to one of at most $K \le 2^k$ buckets. Let $\mathbf{M} \in \{0,1\}^{|\mathcal{Q}| \times K}$ be the bucket membership matrix, where $\mathbf{M}[i,c] = 1$ indicates that question $q_i$ is assigned to bucket $c$. The centroid embedding $\mu_c$ for bucket $c$ is defined as the normalized average of its member embeddings:
\begin{equation}
\mu_c = \frac{\tilde{\mu}_c}{\|\tilde{\mu}_c\|_2}, \quad \tilde{\mu}_c = \frac{\sum_{i=1}^{|\mathcal{Q}|} M[i,c] \cdot e_i^{\hat{a}}}{\sum_{i=1}^{|\mathcal{Q}|} M[i,c]}.
\end{equation}
Here, $\tilde{\mu}_c$ denotes the unnormalized mean of embeddings assigned to bucket $c$. By approximating each question’s influence on others via its interaction with these bucket centroids, we reduce the computational cost of question selection from $O(|\mathcal{Q}|^2)$ to $O(|\mathcal{Q}|K)$, while preserving the semantic structure of the question space.

\stitle{Bucket-Based Approximation}
Instead of computing the full pairwise relevance matrix $\mathbf{R}_q \in \mathbb{R}^{|\mathcal{Q}| \times |\mathcal{Q}|}$ between all questions, we construct a reduced relevance matrix $\hat{\mathbf{R}}_q \in \mathbb{R}^{|\mathcal{Q}| \times K}$ between each question and the $K$ centroids:
\[
\hat{\mathbf{R}}_q[i, c] = \text{sim}_{\text{cos}}(e^{\hat{a}}_i, \mu_c).
\]

This yields an approximate term for the expected incremental information gain:
\[
\widehat{S} = (1 - \mathbf{R}_a^\top \tilde{\delta} M) \odot \hat{\mathbf{R}}_q,
\]
Here, $\widehat{S} \in \mathbb{R}^{|\mathcal{Q}| \times K}$ represents the approximated expected information gain of each candidate question with respect to each bucket. Compared to the original formulation that estimates gain across all question pairs, this approximation reduces the comparison times from $O(|\mathcal{Q}|^2)$ to $O(|\mathcal{Q}|K)$, while effectively estimating the overall incremental information gain for each candidate question.

\stitle{Complexity Analysis.} 
Hashing embeddings into buckets incurs a one-time cost of $O(|\mathcal{Q}|dk)$. The question selection complexity per step reduces to $O(|\mathcal{Q}|K(T+d))$, significantly smaller than the original $O(|\mathcal{Q}|^2T)$ when $K \ll |\mathcal{Q}|$.

\stitle{Approximation Accuracy Analysis}
\todo{We will derive rigorous theoretical bounds on approximation error to quantify this accuracy explicitly.}

}


%% file: sections/matching.tex
\section{Matching Mechanism}\label{sec:match}
In this section, to support reciprocal matchmaking as the downstream application, we discuss (i) how we convert the user transcript into a structured and operable representation, (ii) the design of mutual compatibility scores, and (iii) \ourmatch, an approximate retrieval layer that scales the online matching.

\subsection{User Representation}\label{sec:user-rep}
{
The reciprocal recommendation needs to be performed in a mutual manner, where each user serves both as a candidate and as a seeker. We therefore represent each user $u$ as $(s, m)$, where $s$ describes the user’s self characteristics and $m$ describes the qualities she seeks in a partner.}
To obtain $s$ and $m$, \ours conducts two $T$-turn conversations with each user and outputs corresponding Q\&A transcripts.
A vanilla method is to directly represent $s$ and $m$ as their corresponding dialogue transcript. However, this approach introduces several sources of noise. First, the resulting presentation is sensitive to surface-form factors such as personal tone, phrasing style, or conversational habits, which are largely irrelevant to the reciprocal matchmaking target.
Second, the same high-level characteristic can be expressed in very different ways across users, leading to inconsistent representations. 
For example, users who favor ``movies about space travel'' or ``stories about future technologies'' indicate the same interest in sci-fi themes.

\revise{To overcome these noise sources}, we propose an LLM-based transformation that converts the raw conversation into a more concise, robust user profile, capturing the user’s essential traits and preferences. Specifically, we instruct the LLM to abstract away superficial linguistic details and focus on persona-level insights, with main operations as follows: \textit{Insight Extraction}, \textit{Key Signal Retention}, and \textit{Structured Expression}.
We provide the operation details and full prompts in Appendix~\ref{app:prompts:2}.
To enable quantitative matching, we encode texts $s$ and $m$ to semantic embedding vectors $\mathbf{s}$ and $\mathbf{m}$ using general transformer-based encoder models. They capture the user profile information in a numerical form, allowing direct comparison between users.

\subsection{Matching Criteria}
We then define a satisfaction score to quantify one-directional compatibility between two users. For a given pair of users $(u_i, u_j)$, let $f_{ij}\in [0,1]$ denote how well $u_j$ fits $u_i$’s preferences:
\begin{align*}
f_{ij} 
&= \frac{1}{2}\cos(\mathbf{m}_i, \mathbf{s}_j)+\frac{1}{2}  
= \frac{1}{2}\frac{\mathbf{m}_i \cdot \mathbf{s}_j}{\| \mathbf{m}_i \| \| \mathbf{s}_j \|} + \frac{1}{2}.
\end{align*}
A high $f_{ij}$ indicates that $u_j$ has the attributes that $u_i$ is looking for; conversely, a low $f_{ij}$ implies $u_j$ likely does not meet $u_i$’s criteria. Note that in general $f_{ij} \neq f_{ji}$, since one user can be satisfied with a second even if the second isn’t satisfied with the first, reflecting asymmetric preferences.
To obtain a symmetric measure of mutual compatibility, we applied the harmonic mean to aggregate the two directional scores $f_{ij}$ and $f_{ji}$: 
\begin{equation}\label{eq:g_def}
    g(u_i, u_j) = \frac{2\,f_{ij} \cdot f_{ji}}{f_{ij} + f_{ji}}
\end{equation}
where we define $g(u_i,u_j)=0$ when $f_{ij}+f_{ji}=0$.
The harmonic mean is particularly suitable here because it produces a high value only when both sides are jointly satisfied; if either $f_{ij}$ or $f_{ji}$ is small, $g(u_i, u_j)$ will be strongly penalized. This ensures that a high compatibility score arises only from truly reciprocal alignment, preventing matches driven by one-sided attraction.

\subsection{Fast Approximation}
Retrieving all user pairs whose reciprocal matching score $g(u_i,u_j)$ exceeds a threshold $\theta$ by comparing all candidate pairs requires quadratic computational complexity $O(|\mathcal{U}|^2)$, which is infeasible in real-world matching platforms with hundreds of thousands to millions of concurrent users.
To make retrieval scalable, we propose \ourmatch, a two-stage approximation method.
It (i) derives a tight \emph{necessary} condition for retrieval, (ii) pre-filters candidate pairs via approximate vector search, and (iii) exactly computes mutual compatibility for the remaining pairs.

\stitle{Efficient Pre-filtering}
We derive a necessary condition by relating the compatibility score to the dot product of two user embeddings concatenated in reverse order (detailed proofs in Appendix~\ref{app:a}). 

\begin{theorem}\label{thm:1}
Let $f_{ij},f_{ji}\in(0,1]$. If $g(u_i,u_j) \geq \theta$, then necessarily $[\bar{\mathbf{s}}_i;\,\bar{\mathbf{m}}_i] \cdot [\bar{\mathbf{m}}_j;\,\bar{\mathbf{s}}_j]
\;\ge\; 4\theta - 2$ where $\bar{\mathbf{m}} = \frac{\mathbf{m}}{\|\mathbf{m}\|}$ and $\bar{\mathbf{s}} = \frac{\mathbf{s}}{\|\mathbf{s}\|}.$
\end{theorem}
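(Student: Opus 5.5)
The plan is to pass through the arithmetic mean: first bound $g$ above by the arithmetic mean of $f_{ij}$ and $f_{ji}$, then rewrite that mean as an affine function of the stated dot product.

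\emph{Step 1: harmonic mean $\le$ arithmetic mean.} For $f_{ij},f_{ji}\in(0,1]$ we have $f_{ij}+f_{ji}>0$, so the degenerate convention in Eq.~\eqref{eq:g_def} never applies. Expanding the square gives
\[
(f_{ij}+f_{ji})^2-4f_{ij}f_{ji}=(f_{ij}-f_{ji})^2\ge 0 .
\]
Dividing by $2(f_{ij}+f_{ji})>0$ yields
\[
g(u_i,u_j)=\frac{2f_{ij}f_{ji}}{f_{ij}+f_{ji}}\le\frac{f_{ij}+f_{ji}}{2}.
\]
Hence the hypothesis $g(u_i,u_j)\ge\theta$ implies $f_{ij}+f_{ji}\ge 2\theta$.

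\emph{Step 2: substitute the definition of $f$.} By definition, $f_{ij}=\tfrac12\bar{\mathbf m}_i\cdot\bar{\mathbf s}_j+\tfrac12$ and $f_{ji}=\tfrac12\bar{\mathbf m}_j\cdot\bar{\mathbf s}_i+\tfrac12$. Therefore
\[
f_{ij}+f_{ji}=\tfrac12\big(\bar{\mathbf m}_i\cdot\bar{\mathbf s}_j+\bar{\mathbf m}_j\cdot\bar{\mathbf s}_i\big)+1 .
\]
Combining with Step 1 gives
\[
\bar{\mathbf m}_i\cdot\bar{\mathbf s}_j+\bar{\mathbf m}_j\cdot\bar{\mathbf s}_i\ge 4\theta-2 .
\]

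\emph{Step 3: recognize the concatenated dot product.} Concatenation splits the dot product blockwise:
\[
[\bar{\mathbf s}_i;\bar{\mathbf m}_i]\cdot[\bar{\mathbf m}_j;\bar{\mathbf s}_j]=\bar{\mathbf s}_i\cdot\bar{\mathbf m}_j+\bar{\mathbf m}_i\cdot\bar{\mathbf s}_j .
\]
This is exactly the left-hand side of the inequality in Step 2, which proves the claim.

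No step is a real obstacle. The only points to check are that the denominator $f_{ij}+f_{ji}$ is positive, which the assumption $f_{ij},f_{ji}>0$ guarantees, and that the reversed order of concatenation pairs each $\bar{\mathbf s}$ block with the matching $\bar{\mathbf m}$ block. The condition is tight in the sense that equality holds in Step 1 when $f_{ij}=f_{ji}$.
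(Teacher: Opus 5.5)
Your proof is correct and matches the paper's argument step for step. You bound the harmonic mean by the arithmetic mean to get $f_{ij}+f_{ji}\ge 2\theta$, then expand $f_{ij}+f_{ji}=1+\tfrac12[\bar{\mathbf s}_i;\bar{\mathbf m}_i]\cdot[\bar{\mathbf m}_j;\bar{\mathbf s}_j]$ and rearrange; your explicit check that $f_{ij}+f_{ji}>0$ rules out the degenerate convention for $g$.
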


This motivates a practical user indexing strategy. For each user $u_i$, we store two normalized concatenations, $[\bar{\mathbf{s}}; \bar{\mathbf{m}}]$ and $[\bar{\mathbf{m}}; \bar{\mathbf{s}}]$.  
Given a query user $u_i$, we use $[\bar{\mathbf{s}}_i; \bar{\mathbf{m}}_i]$ as the query vector and compute inner products with the reversed representations of other users, \ie $[\bar{\mathbf{m}}_j; \bar{\mathbf{s}}_j]$ for all $u_j \in \mathcal{U} \setminus \{u_i\}$. By Theorem~\ref{thm:1}, selecting all candidate users whose inner products are at least $4\theta-2$ guarantees that we include every user $u_j$ satisfying $g(u_i,u_j)\ge \theta$.

{
We have now transformed the original filtering condition into a standard range search problem. Range search has been extensively studied, and its approximate versions can be effectively accelerated using vector indexing techniques such as IV\_FLAT~\cite{johnson2019billion}, IVF\_SQ8~\cite{johnson2019billion}, and HNSW~\cite{malkov2018efficient}. It is supported as a scalable primitive in modern vector databases such as Milvus, Redis, pgvector, GaussDB-Vector, AnalyticDB-V, FAISS, and PUCK~\cite{wei2020analyticdb, wang2021milvus, pan2024survey, douze2024faiss, sun2025gaussdb, redisvectordb, pgvector2021, puck2023}.
These systems can handle billion-scale vector search very efficiently~\cite{simhadri2022results}.
We therefore delegate the range search component in \ourmatch to an off-the-shelf vector database, which retrieves candidate pairs satisfying $[\bar{\mathbf{s}}_i;\,\bar{\mathbf{m}}_i] \cdot [\bar{\mathbf{m}}_j;\,\bar{\mathbf{s}}_j]
\;\ge\; 4\theta - 2$ using approximate search to balance effectiveness and practical cost.
After this pre-filtering step, we obtain a candidate set that satisfies the necessary condition.
We then perform exact verification on these candidates by computing $g(u_i,u_j)$, and retain precisely those pairs with $g(u_i,u_j)\ge \theta$.
}

\stitle{Gap Analysis}
The practical efficiency of \ourmatch depends on two factors: (i) the gap between the pre-filtered candidates and the true matches, \ie whether the necessary condition used for range search is sufficiently tight; and (ii) the efficiency of approximated vector range search.
Here, we focus on the first factor and analyze the theoretical error analysis of the gap between the necessary condition and the exact mutual-compatibility score.

\begin{theorem}\label{thm:2}
Let $f_{ij},f_{ji}\in(0,1]$ and $\theta\geq\frac{1}{2}$. If a user pair $(u_i, u_j)$ satisfies $[\bar{\mathbf{s}}_i;\,\bar{\mathbf{m}}_i] \cdot [\bar{\mathbf{m}}_j;\,\bar{\mathbf{s}}_j]
\;\ge\; 4\theta - 2$ then $g(u_i, u_j) \geq \theta - \frac{(1-\theta)^2}{\theta}.$
\end{theorem}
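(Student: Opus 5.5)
The plan is to turn the concatenated inner product back into the two directional satisfaction scores. Then the statement becomes an elementary extremal problem for the harmonic mean on $(0,1]^2$.

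\textbf{Step 1 (rewriting the hypothesis).} By block structure,
\[
[\bar{\mathbf{s}}_i;\bar{\mathbf{m}}_i]\cdot[\bar{\mathbf{m}}_j;\bar{\mathbf{s}}_j]=\cos(\mathbf{m}_j,\mathbf{s}_i)+\cos(\mathbf{m}_i,\mathbf{s}_j).
\]
By the definition of $f$, $\cos(\mathbf{m}_i,\mathbf{s}_j)=2f_{ij}-1$ and $\cos(\mathbf{m}_j,\mathbf{s}_i)=2f_{ji}-1$, so the inner product equals $2(f_{ij}+f_{ji})-2$. The hypothesis is therefore equivalent to
\[
S:=f_{ij}+f_{ji}\ge 2\theta .
\]
This is the same identity that underlies Theorem~\ref{thm:1}, now read in the reverse direction.

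\textbf{Step 2 (worst case for fixed sum).} Fix $S$ and write $x=f_{ij}$, $y=f_{ji}$. Then $g=2xy/S$, so for fixed $S$ we only need to minimize the product $xy$ subject to $x+y=S$ and $x,y\in(0,1]$. The product is concave along this segment, so its minimum is attained at an endpoint of the feasible interval. Because $S\ge 2\theta\ge 1$, the constraint $x,y\le 1$ forces that endpoint to be $\{x,y\}=\{1,S-1\}$. Hence
\[
g\ \ge\ \frac{2(S-1)}{S}=2-\frac{2}{S}.
\]
Here $S-1\ge 0$ holds precisely because $\theta\ge\tfrac12$; this is where that assumption is used.

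\textbf{Step 3 (monotonicity in $S$).} The function $2-2/S$ is increasing in $S$, so $S\ge2\theta$ gives
\[
g\ge 2-\frac{1}{\theta}=\frac{2\theta-1}{\theta}.
\]
It remains to check the algebra:
\[
\theta-\frac{(1-\theta)^2}{\theta}=\frac{\theta^2-1+2\theta-\theta^2}{\theta}=\frac{2\theta-1}{\theta}.
\]
This matches the claimed bound.

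The only step needing care is Step 2, the reduction to the boundary point $(1,S-1)$. I would argue it either by concavity of $x(S-x)$ on the feasible interval $[S-1,1]$, or by the symmetric parametrization $x=\tfrac S2+d$, $y=\tfrac S2-d$, which gives $xy=\tfrac{S^2}{4}-d^2$ with $|d|\le 1-\tfrac S2$.

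The degenerate case $S=1$ (possible only when $\theta=\tfrac12$) needs a separate check. There the claimed bound is $0$, which holds trivially.
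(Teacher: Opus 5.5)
Your proof is correct and follows essentially the paper's route: you convert the hypothesis into $f_{ij}+f_{ji}\ge 2\theta$ via the same identity used for Theorem~1, identify the boundary point $(1,S-1)$ as the worst case for a fixed sum, and then take $S=2\theta$ as the worst sum. The only difference is packaging. The paper writes $g$ as the arithmetic mean minus the AM--HM gap $(f_{ij}-f_{ji})^2/(2s)$ and bounds the two pieces separately, while you minimize $f_{ij}f_{ji}$ directly and obtain the single monotone bound $g\ge 2-2/S$ (which also follows in one line from $(1-f_{ij})(1-f_{ji})\ge 0$).
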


In Theorem~\ref{thm:2}, we assume $\theta \ge \tfrac{1}{2}$, since in practice we typically filter only pairs that are at least moderately compatible.
We derive the gap term as ${(1-\theta)^2}/{\theta}$, where the numerator $(1-\theta)^2$ decays quadratically as $\theta$ grows, while the denominator $\theta$ increases linearly. Hence, the gap shrinks to $0$ as $\theta \to 1$ and is essentially determined by $(1-\theta)^2$. Equivalently, letting $\theta=1-\epsilon$ gives the gap equals $\tfrac{\epsilon^2}{1-\epsilon} = \epsilon^2 + O(\epsilon^3)$.
For instance, when $\theta = 0.9$, the difference between the compatibility score of pre-filtering results and the target threshold is bounded by $(1 - 0.9)^2 / 0.9 \approx 0.0111$; when $\theta = 0.85$, the bound is $0.0264$. These results indicate that when the threshold $\theta$ is relatively high, the approximation error remains very small, suggesting that our necessary condition is tight and reliable in practical settings.

%% file: sections/exp.tex
\section{Experiments}
In this section, we evaluate the proposed \ours, focusing on its application to reciprocal matchmaking. We further assess the effectiveness of the transformation-based user representation, as well as the accuracy and efficiency of the proposed \ourmatch.
We aim to answer the following \revise{five} research questions:

\begin{itemize}
[topsep=2pt,itemsep=1pt,parsep=0pt,partopsep=0pt,leftmargin=11pt]
    \item \textbf{RQ1}: How effectively can \ours elicit respondent profiles?
    \item \textbf{RQ2}: How do different LLM backbones and semantic embedding models influence \ours’s performance?
    \item \textbf{RQ3}: How effectively does the transformation-based user representation improve matching accuracy?
    \item \textbf{RQ4}: How accurate and efficient is \ourmatch in retrieving high-scoring matches at scale?
    \item \textbf{RQ5}: \revise{How effective are NBQ components?}
\end{itemize}
Experiments are conducted on a Linux machine with 32 Intel(R) Core(TM) i9-14900KF and 62GB RAM.

\input{tabls/main_compare}

\subsection{Dataset Description}
\stitle{OpenMatch}
We use OpenBench~\cite{wang2025framework} benchmark for the user profiling evaluation. To focus on the adaptive setting and reciprocal matchmaking downstream, following the synthesis and evaluation pipeline of OpenBench, we further synthesize the OpenMatch respondent instances under two scenarios: romantic matchmaking and job interviews.
We synthesize 100 respondents for the romantic matchmaking scenario and 20 for the job interview scenario, each based on a self-information script consisting of 10 topics and a corresponding questionnaire with 10 questions, yielding 1,200 questions in total.
For the matching evaluation, we divide romantic matchmaking respondents into two subsets, ROMANTIC-MAN and ROMANTIC-WOMAN, each containing 50 instances. We then construct 50 distinct one-to-one true match pairs across the two respondent subsets. For each matched pair, we correspondingly synthesize mate-preference scripts based on each other’s self-information scripts.
Full details provided in Appendix~\ref{app:data}.


\stitle{RandUser-100K}  
To evaluate the scalability and accuracy of our proposed \ourmatch, we construct this synthetic dataset with 100,000 users. Each user is represented by a pair of embeddings $(\mathbf{s}, \mathbf{m})$, where both are uniformly randomized in a 16-dimensional space. Specifically, we sample each dimension from $[0,1]$ and normalize the vectors to unit length so that all embeddings lie on the same hypersphere.  
We choose a 16-dimensional space to ensure adequate coverage of the sphere without requiring an excessively large number of vectors, enabling efficient and reliable evaluation of large-scale matching performance.

\subsection{Experiment Settings}\label{sec:setup}
\stitle{Baselines}
To evaluate the performance of user profiling, we compare \ours with three groups of baselines as follows:
\begin{itemize}[topsep=2pt,itemsep=1pt,parsep=0pt,partopsep=0pt,leftmargin=11pt]
    \item OpenBench interview methods, in which we apply prompt engineering following the original framework~\cite{wang2025framework}, explicitly emphasize the main dimensions of interest in each conversation and clearly specify both the total dialogue-turn budget and the remaining turn quota at each step. Regarding the backbone models, we include three commercial LLMs (GPT-5, GPT-5-mini, and Gemini-2.5-Pro) and three open-source models (Qwen-3-80B, DeepSeek-V3.1, and Llama-3.3-70B). These models represent the latest generation of LLMs, all capable of multi-turn conversation and reasoning~\cite{wang2024mint, zhang2024probing}, thereby serving as competitive baselines.
    \item LLM-based user-preference elicitation methods: MockLLM~\cite{sun2025mockllm} and GATE~\cite{lieliciting}, using GPT-5 as the backbone. For MockLLM, to ensure fair alignment with the other baselines, the agent has no access to public information. For GATE, we use the \textit{generating open-ended questions} setting. 
    To ensure a fair comparison, we specify the main dimensions of interest for the conversation.
    \item \randask: An ablation variant that replaces \ours's adaptive question selection with random sampling.
\end{itemize}

\stitle{Experiment Configurations}
\revise{When constructing the question set for both scenarios, we initialize $\mathcal{D}_{0}$ with the main dimensions of interest used to construct OpenMatch. For each dimension in $\mathcal{D}_{0}$, we generate 10 questions, yielding a question set with $|\mathcal{Q}|=100$.}
We apply GPT-4o to generate relevance ratings and apply GPT-4.1 for answer element analysis, both average three independent runs. We average results over three independent runs and then fit a cubic polynomial curve for calibration.
We use OpenAI's text-embedding-3-large (O-3-large) as the default semantic encoding model with a dimension of 1024.
Unless stated otherwise, these settings are fixed for all experiments.

\stitle{Evaluation Metrics}  
Following OpenBench~\cite{wang2025framework}, we evaluate profiling quality using Answer Accuracy (AC) and Answer Rate (AR). We report AC@T and AR@T under a turn budget $T$. AC@T is the fraction of questions answered correctly after a $T$-turn dialogue, and AR@T is the fraction for which the transcript can provide confident supporting references within $T$ turns. Together, these metrics quantify how well the agent elicits accurate and comprehensive information under limited interaction.
For reciprocal recommendation, we use Hit@K, where $K$ denotes the number of retrieved candidates per user. For a user $u_i$ with true match $u_j$, 
$Hit@K = \mathbb{I}\big(u_j \in \text{TopK}(u_i)\big)$.
\subsection{Profile Elicitation Performance (RQ1)}

Table~\ref{tab:compare_full} reports the performance of \ours and all baselines in terms of AC@T and AR@T on the OpenMatch dataset, across both romantic matchmaking and job interview scenarios. The best results are highlighted in bold, while the second-best results are underlined. Relative improvements over the best-performing baseline for each metric.
As shown in Table~\ref{tab:compare_full}, \ours consistently achieves the best results across all scenarios, dataset subsets, and evaluation metrics.
On the ROMANTIC-MAN subset, it improves AC@T and AR@T by an average of 10.9\% and 11.2\% across T$\in\{3,4,5\}$, with the largest margins of 13.6\% and 14.0\%.
On the ROMANTIC-WOMAN subset, \ours achieves comparable gains, improving the two metrics by an average of 9.7\% and 9.1\%, with the largest improvements of 10.4\% and 10.7\%.
On the JOB scenario, \ours improves two metrics by an average of 6.3\% and 5.8\%, with the largest margins of 11.2\% and 11.0\%.
These consistent improvements demonstrate that our adaptive profiling process can effectively probe more useful information from the respondent than the baselines.

Among the first-group baselines, \ie the interviewer agents from OpenBench with different LLM backbones, GPT-5 achieves the strongest performance in most settings on ROMANTIC-MAN and ROMANTIC-WOMAN, while DeepSeek-V3.1 is generally the best open-sourced backbone and delivers the second strongest results within the group, outperforming Gemini-2.5-Pro and other LLMs. In the JOB scenario, Llama-3.3-70B stands out as the strongest backbone in this group and outperforms both commercial and open-sourced alternatives.
For the second group of user-eliciting methods based on LLM prompt engineering, GATE attains better results in more cases and serves as the second-best among all compared methods in the romantic matchmaking scenario.
In the third baseline group, \randask shows relatively worse performance, further highlighting that random exploration, without proactively and dynamically considering the dialogue state, cannot effectively elicit user information with wide coverage and high confidence. It serves as an ablation study that underscores the effectiveness of our adaptive question selection strategy in guiding the dialogue.
A clear gap between the baselines and \ours remains across all three dataset subsets and all metrics, suggesting that neither random sampling nor prompting alone, even with a state-of-the-art LLM backbone and thinking mode, can accurately model how historical answers should guide future questions. This confirms that explicitly modeling user state and the question–answer relevance in \ours is necessary.

\input{tabls/abl_llms}
\input{tabls/abl_encode}
\subsection{\revise{Model Choice Study (RQ2)}}
\stitle{LLM Backbones}
To assess the robustness of \ours across different language model backbones when analyzing questions and real user answers, we compare GPT-4.1, GPT-5, and Gemini-2.5-flash on the OpenMatch dataset, regarding AC@5 and AR@5. 
As shown in Table~\ref{tab:llm_backbones}, different backbones yield highly consistent performance, with a maximum observed difference of about 1.8\%.
This indicates that our framework is not sensitive to the choice of backbone model, and its effectiveness mainly stems from the proposed adaptive profiling mechanism rather than from specific backbone choices.

\stitle{Embedding Models}
We further examine the influence of different embedding models utilized during the relevance prediction on the performance of \ours. All embeddings are fixed to 1024 dimensions for fair comparison. The evaluated models include text-embedding-3-small (O3-Small), text-embedding-3-large (O3-Large) and gemini-embedding-001 (G-text-001).
Among them, O3-Large and G-text-001 yield better performance, while O3-Small performs relatively worse.
The overall performance gap is bounded within {4.0\%}, suggesting that our method maintains robustness under different embedding choices and does not rely on a particular encoder.

\subsection{Reciprocal Matching Performance (RQ3)}

\input{tabls/match_compare}
In this subsection, we evaluate the end-to-end effectiveness of \ours in reciprocal matching on the OpenMatch dataset.
Each user participates in three independent 5-turn conversations for both self-information ($s$) and mate-preference ($m$).
We retrieve the top $K = \{1, 2, 5, 10\}$ matches from a pool of 100 candidates, and report the $Hit@K$ averaged across these runs to ensure the robustness of the results.
For the Non-transformed (Non-T) setting, we directly encode each user’s raw dialogue regarding self-information and mate-preference into embeddings, while the {Transformed} method applies the structured transformation introduced in Section~\ref{sec:user-rep} to summarize high-level insights and filter out superficial noises.

As shown in Table~\ref{tab:hit_results_mm},
the transformation-based representation consistently outperforms the non-transformed version across all candidate sizes. On the MAN-50 subset, it yields an average improvement of 59.2\% and up to 174.5\%. On the WOMAN-50 subset, it achieves an average gain of 67.8\% and up to 88.8\%. These consistent improvements demonstrate that the transformation step effectively enhances representation quality, leading to more accurate and robust reciprocal matching across different user groups.

\input{figs/quick_both}
\subsection{Efficient Matching Evaluation (RQ4)}

To assess the scalability and efficiency of \ourmatch, we run experiments on RandUser-100K using FAISS as the backend for range search and evaluate three index strategies: {IVF1024-Flat}, {IVF1024-SQ8}, and {HNSW32}.
To ensure a sufficient coverage of candidates, we set $nprobe=256$ for IVF-based indexes and $efSearch=256$ for HNSW. We randomly sample 100 query users to measure both recall and latency, repeat each experiment three times, and report the average results.  
We evaluate under six different matching $\theta$ thresholds ranging from 0.6 to 0.775. In this dataset, roughly 10.9\% of the user pairs have matching score exceed $\theta=0.60$, 3.7\% exceed $0.65$,
0.14\% exceed $0.75$, 
and 0.05\% exceed $0.775$. 
$\theta=0.775$ corresponds to a realistic high-precision setting, where on average it retrieves about 50 top matches per user out of the 100K candidates.

\stitle{Accuracy}  
Fig.~\ref{fig:recall-latency-vs-theta} reports the recall of different index types under varying thresholds. The recall generally increases with higher thresholds, as fewer high-score pairs need to be retrieved. For the practical matching cases with $\theta\ge0.75$, HNSW32 achieves near-exact recall of 0.958 and 0.989, respectively, showing that it can closely reproduce the exact results while being much faster. The IVF1024 indexes also maintain strong performance, with {IVF1024-Flat} achieving a recall of 0.7426 even when retrieving roughly 3–4 thousand candidates per query.  

\stitle{Efficiency}  
Fig.~\ref{fig:recall-latency-vs-theta} summarizes the average query latency under the same thresholds. The exact brute-force baseline costs about 2016.3\,ms per query. In contrast, {IVF1024-Flat} achieves an average 16.3$\times$ speedup, ranging from 8.5$\times$ to 20.8$\times$. {IVF1024-SQ8} further improves efficiency through quantization, reaching an average 16.8$\times$ speedup with a minimum of 9.0$\times$. The {HNSW32} index provides the best balance between accuracy and efficiency, with an average 18.4$\times$ speedup, up to 22.9$\times$ in high-threshold scenarios. All indexing methods exhibit the same trend: as $\theta$ increases, \ourmatch accelerates further because fewer candidates are explored.  

Overall, the experiments on RandUser-100K demonstrate that \ourmatch achieves both high effectiveness and scalability. Even under strict matching thresholds, the proposed method preserves near-exact recall while reducing latency significantly, confirming its practicality for large-scale reciprocal matching systems.

\subsection{\revise{Component Analysis (RQ5)}}\label{sec:rq5-components}
\stitle{Validation of the Relevance Mapping}
\revise{We validate the polynomial calibration curve that maps the answer-analysis similarity to the true relevance. We randomly split the unique 4,000 sampled question pairs into five folds, hold out one fold for testing each time, fit the cubic polynomial on the remaining four folds, and evaluate how well its predictions agree with the LLM-generated relevance ratings on the held-out fold. We report the Pearson and Spearman correlations, the root-mean-square error (RMSE), and the expected calibration error (ECE), averaged over all independent runs. As shown in Table~\ref{tab:relcal}, the learned polynomial mapping is well aligned with the LLM-generated ratings and substantially reduces the calibration error, lowering the ECE from $0.400$ (raw cosine) to $0.005$ while also improving the correlation and RMSE. This confirms that the calibrated relevance mapping is accurate and well calibrated.}

\begin{table}[t]
\centering
\revisecolor
\setlength{\tabcolsep}{4pt}
\caption{\revise{Validation of the relevance mapping.}}
\vspace{-3mm}
\begin{tabular}{lcccc}
\toprule
\textbf{Method} & \textbf{Pearson}\,$\uparrow$ & \textbf{Spearman}\,$\uparrow$ & \textbf{RMSE}\,$\downarrow$ & \textbf{ECE}\,$\downarrow$ \\
\midrule
Raw cosine        & 0.743 & 0.744 & 0.408 & 0.400 \\
Polynomial (deg=3) & \textbf{0.771} & 0.744 & \textbf{0.075} & \textbf{0.005} \\
\bottomrule
\end{tabular}
\label{tab:relcal}
\vspace{-2mm}
\end{table}

\stitle{Component Ablation}
\revise{To quantify the contribution of each core component, we conduct four finer-grained ablations on the ROM-MAN subset: (i) \emph{w/o Relevance} removes the calibrated relevance predictor; (ii) \emph{w/o Redundancy} removes the redundancy-penalty term in the user-state update and keeps only the marginal state gain; (iii) \emph{+ Diversity} replaces the information-gain selection with a diversity-driven rule that selects the question least related to those already asked; and (iv) \emph{+ Q*} replaces the \oursub-generated question set with $100$ questions directly generated by the LLM. As shown in Table~\ref{tab:abl_nbq}, \ours consistently achieves the best performance across all metrics. The largest gains are $+6.0\%$ AC@5 / $+4.4\%$ AR@5 over \emph{w/o Relevance}, $+3.0\%$ AC@3 / $+2.5\%$ AR@3 over \emph{w/o Redundancy}, $+6.7\%$ AC@4 / $+6.3\%$ AR@3 over \emph{+ Diversity}, and $+5.9\%$ AC@4 / $+4.5\%$ AR@5 over \emph{+ Q*}.
These results indicate that each core component, \ie the calibrated relevance, the redundancy handling, the information-gain-based selection, and the comprehensive question-set generation, contributes to the overall profiling quality.}

\begin{table}[t]
\centering
\revisecolor
\setlength{\tabcolsep}{2pt}
\caption{\revise{\ours component ablation study on ROM-MAN.}}
\vspace{-3mm}
\begin{adjustbox}{max width=\columnwidth}
\begin{tabular}{lcccccc}
\toprule
\textbf{Variant} & AC@3 & AR@3 & AC@4 & AR@4 & AC@5 & AR@5 \\
\midrule
\ours w/o Relevance  & 0.691 & 0.764 & 0.777 & 0.839 & 0.797 & 0.857 \\
\ours w/o Redundancy & 0.682 & 0.748 & 0.769 & 0.823 & 0.840 & 0.885 \\
\ours + Diversity    & 0.660 & 0.722 & 0.743 & 0.810 & 0.800 & 0.850 \\
\ours + Q*           & 0.679 & 0.746 & 0.749 & 0.810 & 0.800 & 0.857 \\
\midrule
\ours                & \textbf{0.703} & \textbf{0.767} & \textbf{0.793} & \textbf{0.843} & \textbf{0.845} & \textbf{0.895} \\
\bottomrule
\end{tabular}
\end{adjustbox}
\label{tab:abl_nbq}
\vspace{-2mm}
\end{table}

%% file: tabls/main_compare.tex
\begin{table*}[!t]
\centering
\large
\setlength{\tabcolsep}{2pt}
\caption{Performance evaluation in AC@T and AR@T on OpenMatch across three scenarios.}\vspace{-3mm}
\begin{adjustbox}{max width=\textwidth}
\begin{tabular}{lcccccccccccccccccc}
\toprule
\multirow{2}{*}{\textbf{Models}}
& \multicolumn{6}{c}{\man}
& \multicolumn{6}{c}{\woman}
& \multicolumn{6}{c}{\textbf{JOB}} \\
\cmidrule(lr){2-7} \cmidrule(lr){8-13} \cmidrule(lr){14-19}
& AC@3 & AR@3 & AC@4 & AR@4 & AC@5 & AR@5
& AC@3 & AR@3 & AC@4 & AR@4 & AC@5 & AR@5
& AC@3 & AR@3 & AC@4 & AR@4 & AC@5 & AR@5 \\
\midrule
GPT-5        & \underline{0.619} & \underline{0.673} & 0.702 & 0.752 & 0.778 & 0.822
             & 0.600 & 0.649 & 0.700 & \underline{0.746} & 0.777 & 0.811
             & 0.595 & 0.613 & 0.703 & 0.722 & 0.743 & 0.748 \\
GPT-5-mini   & 0.606 & 0.662 & 0.677 & 0.737 & 0.772 & 0.817
             & 0.585 & 0.632 & 0.661 & 0.705 & 0.748 & 0.770
             & 0.605 & 0.620 & 0.683 & 0.700 & 0.742 & 0.758 \\
Gemini-2.5-Pro & 0.602 & 0.665 & 0.687 & 0.741 & 0.764 & 0.809
             & 0.597 & 0.640 & 0.679 & 0.722 & 0.769 & 0.800
             & 0.580 & 0.598 & 0.665 & 0.687 & 0.720 & 0.735 \\
Qwen-3-80B   & 0.537 & 0.597 & 0.616 & 0.669 & 0.679 & 0.733
             & 0.533 & 0.563 & 0.608 & 0.651 & 0.656 & 0.689
             & 0.593 & 0.608 & 0.682 & 0.695 & 0.753 & 0.763 \\
Deepseek-V3.1 & 0.608 & 0.667 & 0.701 & 0.757 & \underline{0.783} & \underline{0.825}
             & 0.583 & 0.622 & 0.697 & 0.733 & 0.763 & 0.783
             & 0.605 & 0.627 & 0.705 & 0.715 & 0.783 & 0.788 \\
Llama-3.3-70B & 0.613 & 0.660 & 0.689 & 0.737 & 0.757 & 0.808
             & 0.581 & 0.623 & 0.686 & 0.707 & 0.760 & 0.790
             & 0.616 & 0.627 & 0.723 & 0.736 & \underline{0.785} & \underline{0.797} \\
\midrule
MockLLM      & 0.549 & 0.611 & 0.654 & 0.709 & 0.739 & 0.779
             & 0.538 & 0.588 & 0.630 & 0.668 & 0.727 & 0.762
             & 0.587 & 0.600 & 0.627 & 0.642 & 0.665 & 0.678 \\
GATE         & 0.604 & 0.655 & \underline{0.714} & \underline{0.758} & 0.779 & 0.817
             & \underline{0.624} & 0.663 & \underline{0.705} & 0.745 & \underline{0.782} & \underline{0.818}
             & 0.612 & 0.632 & 0.658 & 0.673 & 0.708 & 0.718 \\
\midrule
\randask     & 0.595 & 0.651 & 0.677 & 0.729 & 0.735 & 0.784
             & 0.619 & \underline{0.672} & 0.687 & 0.739 & 0.738 & 0.775
             & \underline{0.698} & \underline{0.713} & \underline{0.760} & \underline{0.770} & 0.778 & 0.792 \\
\midrule
\ours        & \makecell{\textbf{0.703}\\ \normalsize +13.6\%} &
               \makecell{\textbf{0.767}\\ \normalsize +14.0\%} &
               \makecell{\textbf{0.793}\\ \normalsize +11.1\%} &
               \makecell{\textbf{0.843}\\ \normalsize +11.2\%} &
               \makecell{\textbf{0.845}\\ \normalsize +7.9\%}  &
               \makecell{\textbf{0.895}\\ \normalsize +8.5\%}  &
               \makecell{\textbf{0.689}\\ \normalsize +10.4\%} &
               \makecell{\textbf{0.744}\\ \normalsize +10.7\%} &
               \makecell{\textbf{0.775}\\ \normalsize +9.9\%}  &
               \makecell{\textbf{0.817}\\ \normalsize +9.5\%}  &
               \makecell{\textbf{0.850}\\ \normalsize +8.7\%}  &
               \makecell{\textbf{0.877}\\ \normalsize +7.2\%}  &
               \makecell{\textbf{0.707}\\ \normalsize +1.3\%}  &
               \makecell{\textbf{0.713}\\ \normalsize +0.0\%}  &
               \makecell{\textbf{0.808}\\ \normalsize +6.3\%}  &
               \makecell{\textbf{0.820}\\ \normalsize +6.5\%}  &
               \makecell{\textbf{0.873}\\ \normalsize +11.2\%} &
               \makecell{\textbf{0.885}\\ \normalsize +11.0\%} \\
\bottomrule
\end{tabular}
\end{adjustbox}
\vspace{-2mm}
\label{tab:compare_full}
\end{table*}

%% file: tabls/abl_llms.tex

\begin{table}[t]
\centering 
\setlength{\tabcolsep}{2pt}
\caption{\ours with different LLM backbones.}
\vspace{-3mm}
\begin{tabular}{lcccccc}
\toprule
\textbf{Backbones} & \multicolumn{2}{c}{\shortman} & \multicolumn{2}{c}{\shortwoman} & \multicolumn{2}{c}{\textbf{JOB}} \\
\cmidrule(lr){2-3} \cmidrule(lr){4-5} \cmidrule(lr){6-7}
& AC@5 & AR@5 & AC@5 & AR@5 & AC@5 & AR@5 \\
\midrule
GPT-4.1          & {\underline{0.845}} & {\underline{0.895}} & {0.850} & 0.877 & 0.873 & 0.885 \\
GPT-5            & \underline{0.845} & 0.885 & \underline{0.857} & {\underline{0.884}} & \textbf{0.880} & \textbf{0.888} \\
Gemini-2.5-flash & {\underline{0.845}} & \textbf{{0.897}} & \textbf{{0.865}} & \textbf{{0.888}} & {\underline{0.875}} & 0.885 \\
\bottomrule
\end{tabular}
\vspace{-2mm}
\label{tab:llm_backbones}
\end{table}

%% file: tabls/abl_encode.tex

\begin{table}[t]
\centering 
\setlength{\tabcolsep}{2pt}
\caption{\ours with different encoding models.}
\vspace{-3mm}
\begin{tabular}{lcccccc}
\toprule
\textbf{Encoding} & \multicolumn{2}{c}{\shortman} & \multicolumn{2}{c}{\shortwoman} & \multicolumn{2}{c}{\textbf{JOB}} \\
\cmidrule(lr){2-3} \cmidrule(lr){4-5} \cmidrule(lr){6-7}
& AC@5 & AR@5 & AC@5 & AR@5 & AC@5 & AR@5 \\
\midrule
O-3-large  & {\underline{0.845}} & \textbf{{0.895}} & \textbf{{0.850}} & \textbf{{0.877}} & 0.873 & 0.885 \\
O-3-small  & 0.829 & 0.887 & 0.817 & 0.844 & {\underline{0.883}} & {\underline{0.897}} \\
G-text-001 & \textbf{{0.849}} & {\underline{0.893}} & {\underline{0.831}} & {\underline{0.860}} & \textbf{{0.897}} & \textbf{{0.903}} \\
\bottomrule
\end{tabular}\vspace{-2mm}
\label{tab:encoding_models}
\end{table}

%% file: tabls/match_compare.tex
\begin{table}[t]
\centering
\caption{Matching accuracy with different representations.}
\vspace{-3mm}
\begin{tabular}{lcccc}
\toprule
\multirow{2}{*}{\textbf{Metric}}
& \multicolumn{2}{c}{\shortman}
& \multicolumn{2}{c}{\shortwoman} \\
\cmidrule(lr){2-3}\cmidrule(lr){4-5}
& Non-T & Transformed & Non-T & Transformed \\
\midrule
Hit@1  & 0.027 & \textbf{0.073} {\small (+174.5\%)} & 0.040 & \textbf{0.060} {\small (+50.0\%)} \\
Hit@2  & 0.093 & 0.093 {\small (+0.0\%)}          & 0.060 & \textbf{0.113} {\small (+88.8\%)} \\
Hit@5  & 0.180 & \textbf{0.213} {\small (+18.5\%)} & 0.140 & \textbf{0.253} {\small (+80.9\%)} \\
Hit@10 & 0.287 & \textbf{0.413} {\small (+44.2\%)} & 0.247 & \textbf{0.373} {\small (+51.3\%)} \\
\bottomrule
\end{tabular}
\label{tab:hit_results_mm}
\vspace{-2mm}
\end{table}

%% file: figs/quick_both.tex

\begin{figure}[t]
\centering
\hspace{-3mm}
\begin{tikzpicture}

\begin{groupplot}[
  group style={
    group size=2 by 1,
    horizontal sep=12mm, 
  },
  width=0.56\columnwidth,
  height=0.56\columnwidth,
  xmin=0.59, xmax=0.785,
  xtick={0.60,0.65,0.70,0.725,0.75,0.775},
  xticklabel={
    \pgfmathparse{\tick}%
    \ifdim\pgfmathresult pt=0pt
      0%
    \else
      \pgfmathprintnumber[
        fixed,
        precision=3,
        skip 0.
      ]{\tick}%
    \fi
  },
  xticklabel style={
    rotate=90, anchor=east,
    font=\small
  },
  yticklabel style={font=\small},
  label style={font=\small},
  enlarge x limits=0.02,
  every axis plot/.style={black, line width=1.0pt},
  xlabel={}, 
]

\nextgroupplot[
  ymin=0.22, ymax=1.05,
  ylabel={Recall},
  ytick={0.2,0.4,0.6,0.8,1.0},
  legend to name=globallegend,
  legend style={
    font=\small,
    draw=none,
    fill=white,
    fill opacity=0.9,
    /tikz/every even column/.append style={column sep=10pt},
  },
  legend columns=3,
]

\addplot+[color=red, mark=diamond, mark size=3pt,
          mark options={fill=white, line width=.5pt, solid}, solid, line width=.5pt]
  coordinates {(0.60,0.6284) (0.65,0.7426) (0.70,0.8453) (0.725,0.8914) (0.75,0.9287) (0.775,0.9614)};
\addlegendentry{IVF1024-Flat}

\addplot+[color=blue, mark=x, mark size=2pt,
          mark options={line width=.5pt, solid}, dotted, line width=1.0pt]
  coordinates {(0.60,0.6281) (0.65,0.7420) (0.70,0.8443) (0.725,0.8898) (0.75,0.9259) (0.775,0.9582)};
\addlegendentry{IVF1024-SQ8}

\addplot+[color=black, mark=o, mark size=2pt,
          mark options={line width=.5pt, solid}, solid, line width=.5pt]
  coordinates {(0.60,0.3062) (0.65,0.5324) (0.70,0.7921) (0.725,0.8912) (0.75,0.9576) (0.775,0.9891)};
\addlegendentry{HNSW32}

\nextgroupplot[
  ymin=75, ymax=250,
  ylabel={Latency (ms)},
  ytick={100,150,200,250},
]

\addplot+[color=red, mark=diamond, mark size=3pt,
          mark options={fill=white, line width=.5pt, solid}, solid, line width=.5pt]
  coordinates {(0.60,238.4) (0.65,166.6) (0.70,111.0) (0.725,103.9) (0.75,109.0) (0.775,97.13)};

\addplot+[color=blue, mark=x, mark size=2pt,
          mark options={line width=.5pt, solid}, dotted, line width=1.0pt]
  coordinates {(0.60,223.8) (0.65,148.0) (0.70,119.1) (0.725,100.3) (0.75,102.4) (0.775,97.43)};

\addplot+[color=black, mark=o, mark size=2pt,
          mark options={line width=.5pt, solid}, solid, line width=.5pt]
  coordinates {(0.60,155.3) (0.65,138.4) (0.70,111.5) (0.725,98.30) (0.75,95.35) (0.775,88.18)};

\end{groupplot}

\node at ($(group c1r1.north)!0.5!(group c2r1.north)$) [above=.5mm]
  {\pgfplotslegendfromname{globallegend}};

\node[font=\small] at ($(group c1r1.south)!0.5!(group c2r1.south)$) [below=6mm] {$\theta$};

\end{tikzpicture}
\vspace{-3mm}
\caption{Recall and latency under varying $\theta$.}
\vspace{-3mm}
\label{fig:recall-latency-vs-theta}
\end{figure}

%% file: sections/related.tex
\section{Related Work}

\stitle{AI Interviewer \& Conversation Planner}
Our work is closest to agents that actively plan multi-turn interviews to profile users via question selection or generation. Recent methods learn dialogue policies to optimize long-horizon conversation rewards, including (hierarchical) policy-learning frameworks in Conversational Recommender Systems (CRS)~\cite{www2023hicr,lin2023personalization}, retrieval-augmented pipelines that ground turns on external evidence~\cite{kemper2024ragconvrec}, and planning approaches that model conversations as stochastic processes in a semantic space~\cite{chen2025scope}.
However, a key challenge is that informativeness rewards are hard to define because supervision often comes from downstream outcomes (e.g., customer purchases), and most methods rely on task-specific training or external knowledge; empirical CRS studies further highlight the difficulty to align generic conversation rewards with information-gain–-centric profiling~\cite{zhang2024empirical}.
In contrast, our policy is plug-and-play and selects the next best question without retraining. 
Recent related elicitation systems such as MockLLM~\cite{sun2025mockllm} and GATE~\cite{lieliciting} use prompting to guide question asking (e.g., avoiding repetition and maximizing marginal information gain), but they do not explicitly model question--answer relevance and do not assume a strict turn budget.
\revise{Active learning also studies informativeness-driven querying~\cite{settles2009active}, but its standard setting selects unlabeled samples for annotation. \ours addresses a different problem: choosing open-ended natural-language questions in an evolving dialogue and estimating their value from partial, free-form answers under a tight turn budget.}

\stitle{Reciprocal Recommender System}
Reciprocal recommenders optimize mutual satisfaction, with online dating as a canonical domain. Classical systems (e.g., RECON~\cite{pizzato2010recon}) and later large-scale design studies optimize for mutual interest using static attributes or historical interactions~\cite{xia2016design}. Recent work incorporates richer two-sided signals, such as reciprocal sequential recommendation (ReSeq)~\cite{zheng2023reseq} and knowledge-aware, explainable matching (KAERR)~\cite{lai2024knowledge}, typically operating on existing structured profiles, logged interactions, or knowledge graphs. 
LLM-based methods can further enrich existing profiles before matching; for instance, MockLLM~\cite{sun2025mockllm} synthesizes pairwise interview dialogues and uses LLMs for two-sided evaluation, but this is mainly a re-ranking strategy over small candidate sets because exhaustive pairwise LLM evaluation does not scale to large platforms. 
Unlike these works that assume profiles already exist, we study reciprocal matching in a cold-start, interview-style setting: \ours elicits user profiles via a conversation and make real-time matches directly based on the resulting dialogues.

We defer approximate vector search related work to Appendix~\ref{app:related}.


%% file: sections/conclusion.tex
\section{Conclusion}
In this work, we formalize the NBQ problem and its challenging downstream application, reciprocal matchmaking.
To address the inherent randomness in users’ open-ended responses, \ours adaptively selects the next best question from a pre-generated comprehensive question set to greedily maximize the incremental information gain at each conversation turn.
We also propose a transformation-based user representation that extracts high-level insights from user dialogues and supports arbitrary downstream tasks.
Finally, \ourmatch replaces the quadratic scan with approximate range-search--based pre-filtering, enabling low-latency retrieval of highly compatible matches at scale.
\revise{One limitation is that adapting \ours to a new scenario still requires some manual initialization. Automating this scenario-level setup is an important direction for future work.}


%% file: sections/appendix.tex
\appendix
\input{sections/proof}

\section{Algorithm Pseudocode}\label{app:alg}
Here we present the pseudocode of the question set generation algorithm \texttt{SubdimDiv}, described in Section~\ref{sec:qgen} and shown in Algorithm~\ref{alg:subdimdiv}.
\begin{algorithm}[t]
\caption{\texttt{SubdimDiv}}\label{alg:subdimdiv}
\KwIn{Application scenario $\alpha$, root dimension set $\mathcal{D}_{0}$, minimum and maximum divisions $(k_{\min}, k_{\max})$, and tree height $L$}
\KwOut{Sub-dimension set $\mathcal{D}$}
$\mathcal{D}_0 \gets \mathcal{D}_{0}$\ ;\\ $\mathcal{D}_l \gets \emptyset,\ \forall l \in \{1,\ldots,L\}$;\\
\For{$t \gets 1$ \textbf{to} $L$}{
    \For{each $d \in \mathcal{D}_{t-1}$}{
        $S \gets \texttt{LLMDivide}(d, \alpha, k_{\min}, k_{\max})$;\\
        $\mathcal{D}_t \gets \mathcal{D}_t \cup S$;\\
    }
}
\Return {$\mathcal{D} = \{\mathcal{D}_0, \ldots, \mathcal{D}_L\}$;}
\end{algorithm}

\input{sections/synthesis}

\section{Additional Related Work}\label{app:related}
\stitle{Approximate Vector Search}
There has been a long line of work on approximate vector search in high-dimensional vector space. These well-studied fields, including approximate nearest neighbor search (ANNS) and maximum inner product search (MIPS), differ in metric or non-metric settings, but share the goal of balancing query efficiency and accuracy when retrieving the most \textit{similar} candidates for the query vector. Their methodologies can be briefly classified into three families: hashing~\cite{gionis1999similarity, tao2009quality, shrivastava2014asymmetric, neyshabur2015symmetric, tian2022db, zhao2024efficient}, quantization~\cite{jegou2010product, gong2012iterative}, and proximity graphs~\cite{malkov2018efficient, zhao2020song, ootomo2024cagra, song2024efficient, luo2025tag}.
Unlike most of these methods that target \textit{top-k} queries, a range query retrieves all vectors within a given distance (or \textit{radius}) of the query vector. In database terms, this problem is known as a \textit{similarity join}, which outputs all pairs of vectors with similarity higher than a given threshold. Range search queries can be supported by indexing techniques for ANNS and MIPS. Recent efforts, including SimJoin~\cite{xie2025fast}, go beyond independent per-point query searches by leveraging relationships between partial join results.
In the reciprocal matching scenario, it is common to retrieve all candidate pairs with good compatibility, \ie higher than a pre-defined matching score threshold, which shares a similar setting with vector range search. However, it differs in that the mutual compatibility is based on the product of embedding similarities in both directions and thus cannot be directly supported by the approximate vector search techniques mentioned above. \ourmatch serves as a novel approach to reduce this gap by introducing a practically tight relaxation, augmenting the vector-based representations, and quantitatively bridging the threshold values on both sides.

\input{sections/prompts}

%% file: sections/proof.tex
\section{Proof of Theorems}\label{app:a}
\subsection{Proof of Theorem~\ref{thm:1}}
\setcounter{theorem}{0}
\begin{theorem}
Let $f_{ij},f_{ji}\in(0,1]$. If $g(u_i,u_j) \geq \theta$, then necessarily $[\bar{\mathbf{s}}_i;\,\bar{\mathbf{m}}_i] \cdot [\bar{\mathbf{m}}_j;\,\bar{\mathbf{s}}_j]
\;\ge\; 4\theta - 2$ where $\bar{\mathbf{m}} = \frac{\mathbf{m}}{\|\mathbf{m}\|}$ and $\bar{\mathbf{s}} = \frac{\mathbf{s}}{\|\mathbf{s}\|}.$
\end{theorem}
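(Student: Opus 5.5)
The plan is to reduce the statement to the elementary inequality between the harmonic and arithmetic means, and then unwind the definition of $f$.

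\textbf{Step 1: express the inner product through $f_{ij}$ and $f_{ji}$.} Expanding the block inner product gives
\[
[\bar{\mathbf{s}}_i;\bar{\mathbf{m}}_i]\cdot[\bar{\mathbf{m}}_j;\bar{\mathbf{s}}_j]
=\bar{\mathbf{s}}_i\cdot\bar{\mathbf{m}}_j+\bar{\mathbf{m}}_i\cdot\bar{\mathbf{s}}_j
=\cos(\mathbf{m}_j,\mathbf{s}_i)+\cos(\mathbf{m}_i,\mathbf{s}_j).
\]
By the definition of the satisfaction score, $\cos(\mathbf{m}_i,\mathbf{s}_j)=2f_{ij}-1$ and $\cos(\mathbf{m}_j,\mathbf{s}_i)=2f_{ji}-1$. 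Hence the inner product equals $2(f_{ij}+f_{ji})-2$. The target inequality is therefore equivalent to
\[
\tfrac12(f_{ij}+f_{ji})\ge\theta,
\]
that is, the arithmetic mean of the two directional scores is at least $\theta$.

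\textbf{Step 2: apply HM $\le$ AM.} Because $f_{ij},f_{ji}\in(0,1]$, the sum $f_{ij}+f_{ji}$ is positive, so the zero-sum convention in \eqref{eq:g_def} never applies and $g$ is the genuine harmonic mean. The harmonic mean of two positive numbers never exceeds their arithmetic mean. To verify this, compute
\[
\tfrac12(f_{ij}+f_{ji})-\frac{2f_{ij}f_{ji}}{f_{ij}+f_{ji}}
=\frac{(f_{ij}-f_{ji})^2}{2(f_{ij}+f_{ji})}\ge 0 .
\]
Chaining this with the hypothesis gives $\theta\le g(u_i,u_j)\le\tfrac12(f_{ij}+f_{ji})$, which is exactly the reduced inequality from Step 1.

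\textbf{Main obstacle.} There is no real difficulty here. The only care needed is bookkeeping: pair the blocks correctly, so that $\bar{\mathbf{s}}_i$ meets $\bar{\mathbf{m}}_j$ and $\bar{\mathbf{m}}_i$ meets $\bar{\mathbf{s}}_j$. One must also note that positivity of $f_{ij}$ and $f_{ji}$ rules out the degenerate case of the definition of $g$.

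The computation in Step 2 also records the exact slack, $(f_{ij}-f_{ji})^2/\bigl(2(f_{ij}+f_{ji})\bigr)$, between the arithmetic and harmonic means. This slack is the natural starting point for bounding the gap in Theorem~\ref{thm:2}.
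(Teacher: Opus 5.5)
Your proof is correct and takes essentially the same route as the paper. Both use the identity $\tfrac12(f_{ij}+f_{ji}) - g(u_i,u_j) = (f_{ij}-f_{ji})^2/(2(f_{ij}+f_{ji}))\ge 0$ to obtain $f_{ij}+f_{ji}\ge 2\theta$, and both rewrite $f_{ij}+f_{ji} = 1+\tfrac12[\bar{\mathbf{s}}_i;\bar{\mathbf{m}}_i]\cdot[\bar{\mathbf{m}}_j;\bar{\mathbf{s}}_j]$, differing only in the order in which these two steps are presented.
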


\begin{proof}
Assume $g(u_i,u_j)\ge \theta$. By the definition of $g$ (Eq.~\eqref{eq:g_def}),
$$
g(u_i,u_j)=\frac{2 f_{ij}\cdot f_{ji}}{f_{ij}+f_{ji}}.
$$
We first show that
\begin{equation}
\frac{2 f_{ij}\cdot f_{ji}}{f_{ij}+f_{ji}} \le \frac{f_{ij}+f_{ji}}{2}.
\label{eq:harm_le_arith}
\end{equation}
Indeed,
\[
\frac{f_{ij}+f_{ji}}{2} - \frac{2 f_{ij}\cdot f_{ji}}{f_{ij}+f_{ji}}
= \frac{(f_{ij}-f_{ji})^2}{2(f_{ij}+f_{ji})} \;\ge\; 0,
\]
where the denominator is nonnegative since $f_{ij},f_{ji}\in[0,1]$ and $f_{ij}+f_{ji}>0$.
Combining $g(u_i,u_j)\ge \theta$ with \eqref{eq:harm_le_arith} yields
\[
\theta \le g(u_i,u_j) \le \frac{f_{ij}+f_{ji}}{2},
\]
and thus
\begin{equation}
f_{ij}+f_{ji} \ge 2\theta.
\label{eq:sum_lb}
\end{equation}

Next, expand $f_{ij}$ and $f_{ji}$ in terms of representation embeddings:
\[
f_{ij}=\frac{1+\bar{\mathbf{m}}_i\cdot \bar{\mathbf{s}}_j}{2},
\qquad
f_{ji}=\frac{1+\bar{\mathbf{s}}_i\cdot \bar{\mathbf{m}}_j}{2}.
\]
Where $\bar{\mathbf{m}} = \frac{\mathbf{m}}{\|\mathbf{m}\|}$ and $\bar{\mathbf{s}} = \frac{\mathbf{s}}{\|\mathbf{s}\|}$.
Therefore,
\begin{align*}
f_{ij}+f_{ji}
&= 1 + \frac{1}{2}\left(\bar{\mathbf{m}}_i\cdot \bar{\mathbf{s}}_j
+ \bar{\mathbf{s}}_i\cdot \bar{\mathbf{m}}_j\right) \\
&= 1 + \frac{1}{2}\,[\bar{\mathbf{s}}_i;\bar{\mathbf{m}}_i]\cdot[\bar{\mathbf{m}}_j;\bar{\mathbf{s}}_j].
\end{align*}
Plugging this into \eqref{eq:sum_lb} gives
\[
1 + \frac{1}{2}\,[\bar{\mathbf{s}}_i;\bar{\mathbf{m}}_i]\cdot[\bar{\mathbf{m}}_j;\bar{\mathbf{s}}_j]
\;\ge\; 2\theta,
\]
which is equivalent to
\[
[\bar{\mathbf{s}}_i;\,\bar{\mathbf{m}}_i] \cdot [\bar{\mathbf{m}}_j;\,\bar{\mathbf{s}}_j]
\;\ge\; 4\theta - 2.
\]
This completes the proof.
\end{proof}

\subsection{Proof of Theorem~\ref{thm:2}}
\setcounter{theorem}{1}
\begin{theorem}\label{thm:2}
Let $f_{ij},f_{ji}\in(0,1]$ and $\theta\geq\frac{1}{2}$. If a user pair $(u_i, u_j)$ satisfies $[\bar{\mathbf{s}}_i;\,\bar{\mathbf{m}}_i] \cdot [\bar{\mathbf{m}}_j;\,\bar{\mathbf{s}}_j]
\;\ge\; 4\theta - 2$ then $g(u_i, u_j) \geq \theta - \frac{(1-\theta)^2}{\theta}.$
\end{theorem}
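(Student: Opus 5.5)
The plan is to reduce everything to the two scalars $x=f_{ij}$ and $y=f_{ji}$, and then bound the harmonic mean from below using only their sum and the box constraint $x,y\le 1$.

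\textbf{Step 1: turn the hypothesis into a bound on the sum.} The computation in the proof of Theorem~\ref{thm:1} gives the identity
\[
f_{ij}+f_{ji}=1+\tfrac12[\bar{\mathbf{s}}_i;\bar{\mathbf{m}}_i]\cdot[\bar{\mathbf{m}}_j;\bar{\mathbf{s}}_j].
\]
So the assumed inner-product condition is equivalent to $S:=x+y\ge 2\theta$. Since $\theta\ge\tfrac12$, this also gives $S\ge 1$.

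\textbf{Step 2: bound the product.} The key step is the elementary inequality $(1-x)(1-y)\ge 0$, which holds because $x,y\in(0,1]$. Expanding it gives $xy\ge x+y-1=S-1$. Therefore
\[
g(u_i,u_j)=\frac{2xy}{S}\ \ge\ \frac{2(S-1)}{S}=2-\frac{2}{S}.
\]
This bound is attained at $x=1$, $y=S-1$, which is the extreme, maximally imbalanced case. That case is what the gap term measures.

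\textbf{Step 3: use monotonicity in $S$ and simplify.} The function $S\mapsto 2-2/S$ is increasing on $S>0$. Combining this with $S\ge 2\theta$ yields
\[
g\ \ge\ 2-\frac1\theta=\frac{2\theta-1}{\theta}.
\]
It remains to check the algebraic identity
\[
\theta-\frac{(1-\theta)^2}{\theta}=\frac{\theta^2-(1-2\theta+\theta^2)}{\theta}=\frac{2\theta-1}{\theta},
\]
which completes the argument.

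\textbf{Main obstacle.} The only real difficulty is finding the right lower bound on $xy$ for a fixed sum. For fixed $S$, the product $xy$ is minimized at the boundary of the box $[0,1]^2$, not in its interior. The factorization $(1-x)(1-y)\ge 0$ captures exactly this boundary minimum. The hypothesis $\theta\ge\tfrac12$ is needed so that $S-1\ge 0$, which keeps the resulting bound meaningful and nonnegative. I do not expect any further difficulties.
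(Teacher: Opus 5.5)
Your proof is correct, and it reaches the same intermediate bound as the paper by a shorter route. Write $S=f_{ij}+f_{ji}$. The paper first expresses $g$ as the arithmetic mean minus the AM--HM gap, $\frac{S}{2}-g=\frac{(f_{ij}-f_{ji})^2}{2S}$. It then bounds the imbalance by $|f_{ij}-f_{ji}|\le\min\{S,2-S\}$, using an extremal case split on $S\le 1$ versus $S\ge 1$. Finally it evaluates the gap term $\frac{(1-S/2)^2}{S/2}$ at its worst case $S=2\theta$. The hypothesis $\theta\ge\frac12$ is what puts $S$ on the branch $S\ge 1$, where that term is decreasing, and it also selects $\min\{\theta,1-\theta\}=1-\theta$.

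Your factorization is the same inequality in disguise, since $(2-S)^2-(f_{ij}-f_{ji})^2=4(1-f_{ij})(1-f_{ji})$. Likewise, on $S\ge 1$ the paper's bound $\frac{S}{2}-\frac{(2-S)^2}{2S}$ simplifies to your $2-\frac{2}{S}$. By bounding the product $f_{ij}f_{ji}$ directly, you avoid both the AM--HM decomposition and the case analysis. Your monotonicity step also needs only $\theta>0$.

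This means your closing remark overstates the role of $\theta\ge\frac12$. No step of your argument uses it; for $0<\theta<\frac12$ the bound $2-\frac{1}{\theta}$ is simply negative and hence vacuous.

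What the paper's presentation buys is interpretation. It isolates the AM--HM gap as the error between the arithmetic-mean condition that the range search actually enforces and the true score $g$, which is the quantity its Gap Analysis discusses.

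Your Step~1 is also more precise than the paper's. The paper cites Theorem~1 itself there, but that theorem is an implication in the opposite direction. What is actually needed is the exact identity $f_{ij}+f_{ji}=1+\frac12[\bar{\mathbf{s}}_i;\bar{\mathbf{m}}_i]\cdot[\bar{\mathbf{m}}_j;\bar{\mathbf{s}}_j]$ from its proof, which is what you invoke.
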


\begin{proof}
Let $s=f_{ij}+f_{ji}$. By definition of $g(u_i,u_j)$, we have:
\begin{equation}\label{eq:am-hm-gap}
\begin{aligned}
\frac{f_{ij}+f_{ji}}{2}-g(u_i,u_j)
&=\frac{f_{ij}+f_{ji}}{2}-\frac{2f_{ij}\cdot f_{ji}}{f_{ij}+f_{ji}}\\
&=\frac{(f_{ij}-f_{ji})^2}{2(f_{ij}+f_{ji})}
=\frac{(f_{ij}-f_{ji})^2}{2s}.
\end{aligned}
\end{equation}
Thus, for a fixed $s$, the gap in Eq.~\eqref{eq:am-hm-gap} is monotonically increasing with $|f_{ij}-f_{ji}|$.

Given $f_{ij},f_{ji}\in (0,1]$ and $f_{ij}+f_{ji}=s$, the maximum of $|f_{ij}-f_{ji}|$ is achieved when one term is as large as possible and the other as small as possible.
Specifically, if $s\le 1$, the maximum is attained at $(f_{ij},f_{ji})=(s,0)$, yielding $|f_{ij}-f_{ji}|=s$.
If $s\ge 1$, the maximum is attained at $(f_{ij},f_{ji})=(1,s-1)$, yielding $|f_{ij}-f_{ji}|=2-s$.
Therefore, we have $|f_{ij}-f_{ji}|\le \min\{s,\,2-s\}$, and hence
\begin{equation}\label{eq:max-gap-square}
(f_{ij}-f_{ji})^2 \le \bigl(\min\{s,\,2-s\}\bigr)^2.
\end{equation}
Using $\min\{s,2-s\}=2\min\{s/2,\,1-s/2\}$, we rewrite Eq.~\eqref{eq:max-gap-square} as
\begin{equation*}\label{eq:max-gap-square-half}
(f_{ij}-f_{ji})^2 \le 4\bigl(\min\{s/2,\,1-s/2\}\bigr)^2.
\end{equation*}
Plugging this into Eq.~\eqref{eq:am-hm-gap} gives
\begin{equation}\label{eq:gap-bound-s}
\begin{aligned}
\frac{f_{ij}+f_{ji}}{2}-g(u_i,u_j)
&\le
\frac{4(\min\{s/2,1-s/2\})^2}{2s}\\
&=\frac{(\min\{s/2,\,1-s/2\})^2}{s/2}.    
\end{aligned}
\end{equation}

Now consider a pair $(u_i,u_j)$ satisfying
$[\bar{\mathbf{s}}_i;\bar{\mathbf{m}}_i]\cdot[\bar{\mathbf{m}}_j;\bar{\mathbf{s}}_j]\ge 4\theta-2$.
By Theorem~1, this implies $\frac{f_{ij}+f_{ji}}{2}\ge\theta$, i.e., $s\ge 2\theta$.
The RHS of Eq.~\eqref{eq:gap-bound-s} is maximized when $s$ is minimized, so the worst case occurs at $s=2\theta$.
Substituting $s=2\theta$ into Eq.~\eqref{eq:gap-bound-s} yields
\[
\frac{f_{ij}+f_{ji}}{2}-g(u_i,u_j)
\le
\frac{(\min\{\theta,1-\theta\})^2}{\theta}.
\]
Finally, when $\theta\ge \tfrac12$, we have $\min\{\theta,1-\theta\}=1-\theta$, and hence
\begin{equation*}
g(u_i,u_j) \ge \frac{f_{ij} + f_{ji}}{2} - \frac{(1-\theta)^2}{\theta} \ge \theta-\frac{(1-\theta)^2}{\theta},
\end{equation*}
which proves the theorem.
\end{proof}

%% file: sections/synthesis.tex
\section{Dataset Synthesis}\label{app:data}
Following the synthesis and evaluation pipeline of OpenBench~\cite{wang2025framework}, we synthesize the OpenMatch dataset under two scenarios: romantic matchmaking and job interviews.
We synthesize 100 respondents for the romantic matchmaking scenario and 20 for the job interview scenario.
The matchmaking respondents are divided into two subsets, ROMANTIC-MAN and ROMANTIC-WOMAN, each containing 50 respondents.
For both scenarios, we first identify 10 core dimensions as the knowledge discovery targets.
For each respondent, we then generate a \emph{high-level insight} for every dimension, and use these insights to construct an experience-based script with 10 topics.

\stitle{High-level insight synthesis}
For each respondent $u$, scenario, and script type ({self-information} and, when applicable, {match-preference}), we instruct LLMs to produce a concise, high-level \textit{insight} per dimension that captures the respondent's trait or stance.
For example, in romantic matchmaking, a \textit{Social Lifestyle} insight could be ``Quiet Companionability Seeker'';
in job interviews, a \textit{Work Identity} insight could be ``Fractional Head-of-Product for Seed-Stage SaaS.''
To encourage {inter-user diversity} at the insight level, we generate insights in a {batch mode}.
Specifically, we synthesize insights for a small batch of respondents at a time.
When generating the insight of a target respondent on a given dimension, we list the already-generated insights of other respondents in the same batch for that dimension, and explicitly require the model to propose a {different} insight for the target respondent.
This constraint reduces duplicates and increases the diversity within each scenario and script type.

\stitle{From insights to script topics via coherent personal events}
We next translate high-level insights into concrete, experience-based script topics used by OpenBench.
For each respondent script, we construct 10 topics.
Each topic is grounded in a single personal event or episode that naturally integrates \emph{two} randomly selected high-level insights of that respondent.
Concretely, we sample two insights (e.g., \textit{Social Lifestyle} and \textit{Communication Style}) and prompt the LLM to generate a coherent event narrative that jointly reflects both insights, rather than describing them in isolation.
To avoid \emph{intra-user inconsistency}, when generating each event we provide the respondent's remaining insights across other dimensions as context, and instruct the model to avoid conflicts across topics and dimensions (e.g., mutually exclusive preferences or incompatible identities).
These 10 event narratives compose the respondent's OpenBench {script}.

\stitle{Behavior patterns and response simulation}
Following OpenBench, we annotate each topic with an appropriate behavior pattern (e.g., realistic expansion in responses, referential consistency).
During answer simulation, the LLM plays the role of respondent $u$ conditioned on her script.
When asked a question that touches a particular dimension, the respondent is instructed to retrieve and describe a relevant personal event aligned with that dimension (or the most related topic), and then answer the question grounded in that event.
This design encourages answers to be natural, while remaining faithful to the respondent's underlying high-level insights.

\stitle{Scenario- and script-specific settings}
For romantic matchmaking, we synthesize two scripts per respondent: (i) a \textit{self-information} script describing who they are, and (ii) a \textit{match-preference} script describing what they seek in a partner.
For job interviews, we synthesize only the \textit{self-information} script.
For each script, we generate 10 topics and a corresponding questionnaire of 10 questions, resulting in 1{,}200 questions in total.

\stitle{Reciprocal matching pairs and preference synthesis}
For the reciprocal matching evaluation, we randomly construct 50 distinct one-to-one match pairs across the two respondent subsets for simplicity.
That is, if user $u_i$ is matched with user $u_j$, then $u_j$ is also matched with $u_i$.
Although the matches are randomly assigned, we treat them as pseudo-ground-truth for evaluation.
For each pair $(u_i, u_j)$, we generate $u_i$'s mate-preference script conditioned on $u_j$'s self-information script, and vice versa, ensuring that preferences are explicitly aligned with the partner's self profile.

%% file: sections/prompts.tex
\section{Prompt Review}\label{app:prompts}
\input{prompts/prompt_qgen}
\subsection{Question generation prompt}\label{app:prompts:1}
The question generation prompts are demonstrated in Fig.~\ref{prompt:qgen}.

\input{prompts/prompts_trans}
\subsection{Representation transformation prompt}\label{app:prompts:2}
For the user representation transformation, we instruct LLMs to conduct the following main operations:
\begin{itemize}
[topsep=2pt,itemsep=1pt,parsep=0pt,partopsep=0pt,leftmargin=11pt]

    \item \textbf{Insight Extraction}: Instead of simply rephrasing the Q\&A content, to extract deeper insights into the user’s intrinsic traits and relational needs. With cautious, well-founded inference on the respondent's natural characteristics and core preferences.
    
   \item \textbf{Key Signal Retention}: Critical signals for matching must be retained, including must-have and deal-breaker requirements.
    
    \item \textbf{Structured Expression}: The transformed paragraph should follow a uniform narrative structure with a clear beginning, detailed body, and brief conclusion, using direct descriptive language. Each paragraph should contain about 130--180 words to ensure clarity and comparability across users.
\end{itemize}
The full version of transformation prompts (for self-information dialogue) is shown in Fig.~\ref{prompt:trans}.

\input{prompts/prompts_rel}
\subsection{Relevance analysis prompt}\label{app:prompts:3}
The generative relevance analysis prompts are shown in Fig.~\ref{prompt:rel}.
To take the certainty-weighted average, we multiply the mean overlap rate of each level by the model’s certainty and then sum across all six levels.

\input{prompts/prompts_analysis}
\subsection{Question analysis prompt}\label{app:prompts:4}
When analyzing the answer elements for a given question, we have the following key requirements:
\begin{itemize}
[topsep=2pt,itemsep=1pt,parsep=0pt,partopsep=0pt,leftmargin=11pt]
    \item \textbf{Non-overlapping}: Each element should capture a unique aspect of the answer, avoiding redundancy or overlap.
    
    \item \textbf{Balanced}: All answer elements should be similar in granularity and scope, ensuring no single point dominates the others.
    
    \item \textbf{Complete}: The set of answer elements together should comprehensively cover the answer space of the question.
    
    \item \textbf{Self-contained}: Each answer element must be independently understandable. Abstract pronouns and vague terms are explicitly avoided. When needed, the phrasing may reiterate critical contextual elements to preserve specificity.
\end{itemize}
The full version of question analysis prompts is shown in Fig.~\ref{prompt:analysis}.

%% file: prompts/prompt_qgen.tex
\begin{figure*}[h]
\centering

\fbox{%
\begin{minipage}{\linewidth}
\small
\setlength{\parindent}{0pt}

\textbf{[System Prompts]} \\
You are an agent skilled at following instructions. Your task is to generate a set of interesting and engaging questions aimed at uncovering the respondent’s self-related information, based on the given ``application scenario'' and ``(sub-)dimension'', and, very importantly,  in accordance with the preferred ``question flavor''.
A ``(sub-)dimension'' is a specific (sub-)aspect or (sub-)concept related to the application scenario, usually described with a short phrase and a one-sentence explanation.
A “question flavor” refers to a preferred question style designed to be interesting and to better reveal the user themselves, encouraging them to share more honest thoughts and detailed information about themselves. Each question flavor will be explained with a detailed definition, intention, and examples.

When generating questions for the given dimension, keep in mind the broader core dimension it belongs to, as specified in the user prompt. The questions should stay grounded in the overall theme of the core dimension and make it clear that the given sub-dimension is just one specific aspect within that larger context—not something separate or unrelated.
Furthermore, your generated question set should exhibit relevance:
Relevance: Each question should be clearly aligned with the given sub-dimension. 
Please note that your output should strictly follow this format: a Python list of strings, each representing one question. Do not include any extra explanation or comments. You should strictly follow this output requirement.

[The start of the current application scenario]

\{scenario\}

[The end of the current application scenario]

[The start of the preferred question flavor descriptions]

\{flavor\}

[The end of the preferred question flavor descriptions]

[The start of the output format example with question set number: 3]

[question\_str\_1, question\_str\_2, question\_str\_3,]

[The end of the output format example]

\end{minipage}
}%
\hfill

\vspace{1em}

\fbox{%
\begin{minipage}{\linewidth}
\small
\setlength{\parindent}{0pt}

\textbf{[User Prompts]} \\
Now, please generate a set of \{num\} questions according to the following core dimension and its sub-dimension and the given question flavor.
Remember, your output should strictly follow the format requirement.

[The start of the core-dimension]

\{core\}

[The end of the core-dimension]

[The start of the sub-dimension]

\{dimension\}

[The end of the sub-dimension]
\end{minipage}
}

\caption{Question generation prompts.}\label{prompt:qgen}
\end{figure*}

%% file: prompts/prompts_trans.tex
\begin{figure*}[h]
\centering

\fbox{%
\begin{minipage}{\linewidth}
\small
\setlength{\parindent}{0pt}

\textbf{[System Prompts]} \\
You are an agent skilled at following instructions. You are working for a reciprocal matching system, where we aim to match individuals on two sides with each other. Each person on either side can be represented as x = (x.s, x.m), where x.s captures their self-information, and x.m captures their preferences for a potential partner (mate).
More specifically, you are working on the following matching scenario:
[scenario starts] \{scenario\} [scenario ends]

We will conduct an in-depth conversation with each respondent, focusing on collecting their self-information, consisting of a series of proposed questions and their corresponding answers (Q\&A dialogue). Your task is to transform this dialogue into: the respondent’s self-information according to this scenario (x.s).
Please mention that the gender of this respondent is: {gender}. When writing the self-information representation, you MUST represent the respondent’s gender by using the appropriate pronoun (“He” or “She”).

Your transformation MUST satisfy the following requirements:
-You should not simply rephrase or summarize the dialogue content. Instead, transform the superficial details into a deeper reflection of the respondent’s natural characteristics and core needs. You may apply cautious and reasonable inferences or associations where appropriate.
-You must retain any key determinant information or hard constraints that are important for matching.
-For the transformation (x.s), provide: a short introduction of the person, followed by a detailed description, and end with a brief concluding sentence.
-Ensure the information is exclusive: The self-information (x.s) should describe only the respondent themselves, without referencing their expectations for a partner.
-Be concise: aim for approximately 130-180 words.
-MUST use the tone and style that directly describes a person. For example, x.s should begin with “He/She is … (where the subject is the respondent)”.

Please note that your output should strictly follow this format: a plain string representing self-information (x.s), nothing else. Do not include any extra reasoning, explanation, or comments in the output. You MUST always strictly follow this output requirement.

[The start of the output format example]

format example: "He is a… ", or “She is a…" 

[The end of the output format example]

\end{minipage}
}%
\hfill

\vspace{1em}

\fbox{%
\begin{minipage}{\linewidth}
\small
\setlength{\parindent}{0pt}

\textbf{[User Prompts]} \\
The following is the Q\&A dialogue for the respondent:
\{dialog\}
Now, please perform the transformation as required. Remember, your output format must strictly follow the specified requirements.
\end{minipage}
}

\caption{Transformation-based representation prompts.}\label{prompt:trans}
\end{figure*}

%% file: prompts/prompts_rel.tex
\begin{figure*}[h]
\centering

\fbox{%
\begin{minipage}{\linewidth}
\small
\setlength{\parindent}{0pt}

\textbf{[System Prompts]} \\
You are an agent skilled at following instructions. Your task is to analyze and predict the relevance between two open-ended questions, based on the following application scenario:
\{scenario\}
Each question is assumed to elicit multiple aspects of information about a person. These aspects may be internal or external, such as personal opinions, values, attitudes, subjective feelings, knowledge, ideas, past experiences, background information, factual details, and behavior patterns in specific situations.

For the given two questions, we assume they are designed to elicit responses of comparable breadth and depth.

We define the relevance between two questions as the expected degree of overlap in the content of the responses they elicit. In other words, it reflects the proportion of a person’s answer to one question that could also meaningfully contribute to answering the other.

We categorize this relevance into six levels based on the estimated overlap:

-“Redundant”: >90\% overlap (basically the same questions, possibly rephrased)

-“Highly relevant”: 60-90\% overlap (very similar, most parts can be shared)

-“Relevant”: 40-60\% overlap (about half of the answer content can be shared)

-“Moderately relevant”: 20-40\% overlap (some non-trivial part can be shared)

-“Marginally relevant”: 5-20\% overlap (very limited overlap, mostly distinct)

-“Irrelevant”: <5\% overlap (almost no shared content)

You should output: (1) a brief reasoning (a few sentences), and (2) the likelihood for each relevance level between these two questions, i.e., a probability distribution across the six relevance levels. The values must sum to 1.0. 
Your output should strictly follow this format: a Python list of two items, (1) the first is your reasoning as a string; (2) the second is a plain Python dictionary (without quotation marks) showing the likelihood for each relevance level. You should strictly follow this output requirement. The following are some output format examples:

[start of the output example 1]

[“Both questions explore xxx, but differ in scope…”,  \{‘Redundant’: 0.0, ‘Highly relevant’: 0.6, ‘Relevant’: 0.3, ‘Moderately relevant’: 0.1, ‘Marginally relevant’: 0.0, ‘Irrelevant’: 0.0\}]

[end of the output example 1]

[start of the output example 2]

[“The first question invites xxx , while the second presents yyy. They both relate to zzz but target very different…”,  \{‘Redundant’: 0.0, ‘Highly relevant’: 0.05, ‘Relevant’: 0.1, ‘Moderately relevant’: 0.2, ‘Marginally relevant’: 0.4, ‘Irrelevant’: 0.25\}]

[end of the output example 2]

\end{minipage}
}%
\hfill

\vspace{1em}

\fbox{%
\begin{minipage}{\linewidth}
\small
\setlength{\parindent}{0pt}

\textbf{[User Prompts]} \\
The following are two questions:
Question 1: \{q\_a\};
Question 2: \{q\_b\}.
Now, please generate the reasoning and the likelihood for each relevance level. Remember, your output format must strictly follow the specified requirements.
\end{minipage}
}

\caption{Generative relevance analysis prompts.}\label{prompt:rel}
\end{figure*}

%% file: prompts/prompts_analysis.tex
\begin{figure*}[h]
\centering

\fbox{%
\begin{minipage}{\linewidth}
\small
\setlength{\parindent}{0pt}

\textbf{[System Prompts]} \\
You are an agent skilled at following instructions.
We have generated a set of questions for a respondent based on the following application scenario:
{scenario}

Each question is assumed to elicit information we are interested in about the person. We expect the respondent’s answer to include specific details about their opinions, values, attitudes, past experiences, facts, ideas, knowledge, feelings, behavior patterns, skills, and more.
Your task is to summarize {k} main aspects which is expected to be covered by the respondent’s answer, with the following requirements:

-Non-overlapping: Each aspect should be distinct, with no redundancy across points.

-Balanced: The aspects should be similar in scope and significance.

-Complete: Collectively, the aspects should provide broad and comprehensive coverage of the potential answer content.

-Self-contained: Each aspect must be fully self-contained and include specific context from the original question. General pronouns (e.g., this case, other person, the experience, those words, that moment, etc.) should be replaced with specific terms from the original question to ensure clarity for readers who do not have access to the original context. When applicable, the specific sub-scenario or sub-dimension being addressed should also be clearly mentioned.

Your output should strictly follow this format: a Python list of strings, each representing one sub-aspect, in the form of “sub-aspect (information-class): detailed-explanation”. Remember, you should only output the Python list and do not include any unnecessary explanations or comments. (The information-class should be an appropriate term and is not limited to the examples provided earlier.)

[start of the output example]

(skip)

[end of the output example]

\end{minipage}
}%
\hfill

\vspace{1em}

\fbox{%
\begin{minipage}{\linewidth}
\small
\setlength{\parindent}{0pt}

\textbf{[User Prompts]} \\
For the following question, please summarize its main sub-aspects. Remember, your output format must strictly follow the specified requirements. The question:
\{q\} 
\end{minipage}
}

\caption{Question analysis prompts.}\label{prompt:analysis}
\end{figure*}

%% file: ref.bib
@article{settles2009active,
  title={Active learning literature survey},
  author={Settles, Burr},
  year={2009},
  publisher={University of Wisconsin-Madison Department of Computer Sciences}
}

@inproceedings{wang2025framework,
  title={A Framework for Evaluating AI Agents in Open-Ended Conversations via Scripted Simulation},
  author={Wang, Clarice and Shi, Yimin and Xiao, Xiaokui},
  booktitle={Proceedings of the 31st ACM SIGKDD Conference on Knowledge Discovery and Data Mining V. 2},
  pages={5810--5818},
  year={2025}
}

@inproceedings{thomas2024large,
  title={Large language models can accurately predict searcher preferences},
  author={Thomas, Paul and Spielman, Seth and Craswell, Nick and Mitra, Bhaskar},
  booktitle={Proceedings of the 47th International ACM SIGIR Conference on Research and Development in Information Retrieval},
  pages={1930--1940},
  year={2024}
}

@inproceedings{farzi2025criteria,
  title={Criteria-Based LLM Relevance Judgments},
  author={Farzi, Naghmeh and Dietz, Laura},
  booktitle={Proceedings of the 2025 International ACM SIGIR Conference on Innovative Concepts and Theories in Information Retrieval (ICTIR)},
  pages={254--263},
  year={2025}
}

@article{sun2025gaussdb,
  title={GaussDB-Vector: A Large-Scale Persistent Real-Time Vector Database for LLM Applications},
  author={Sun, Ji and Li, Guoliang and Pan, James and Wang, Jiang and Xie, Yongqing and Liu, Ruicheng and Nie, Wen},
  journal={Proceedings of the VLDB Endowment},
  volume={18},
  number={12},
  pages={4951--4963},
  year={2025},
  publisher={VLDB Endowment}
}

@article{pan2024survey,
  title={Survey of vector database management systems},
  author={Pan, James Jie and Wang, Jianguo and Li, Guoliang},
  journal={The VLDB Journal},
  volume={33},
  number={5},
  pages={1591--1615},
  year={2024},
  publisher={Springer}
}

@inproceedings{wang2021milvus,
  title={Milvus: A purpose-built vector data management system},
  author={Wang, Jianguo and Yi, Xiaomeng and Guo, Rentong and Jin, Hai and Xu, Peng and Li, Shengjun and Wang, Xiangyu and Guo, Xiangzhou and Li, Chengming and Xu, Xiaohai and others},
  booktitle={Proceedings of the 2021 international conference on management of data},
  pages={2614--2627},
  year={2021}
}

@article{douze2024faiss,
  title={The faiss library},
  author={Douze, Matthijs and Guzhva, Alexandr and Deng, Chengqi and Johnson, Jeff and Szilvasy, Gergely and Mazar{\'e}, Pierre-Emmanuel and Lomeli, Maria and Hosseini, Lucas and J{\'e}gou, Herv{\'e}},
  journal={arXiv preprint arXiv:2401.08281},
  year={2024}
}

@inproceedings{simhadri2022results,
  title={Results of the NeurIPS’21 challenge on billion-scale approximate nearest neighbor search},
  author={Simhadri, Harsha Vardhan and Williams, George and Aum{\"u}ller, Martin and Douze, Matthijs and Babenko, Artem and Baranchuk, Dmitry and Chen, Qi and Hosseini, Lucas and Krishnaswamny, Ravishankar and Srinivasa, Gopal and others},
  booktitle={NeurIPS 2021 Competitions and Demonstrations Track},
  pages={177--189},
  year={2022},
  organization={PMLR}
}

@article{wei2020analyticdb,
  title={Analyticdb-v: A hybrid analytical engine towards query fusion for structured and unstructured data},
  author={Wei, Chuangxian and Wu, Bin and Wang, Sheng and Lou, Renjie and Zhan, Chaoqun and Li, Feifei and Cai, Yuanzhe},
  journal={Proceedings of the VLDB Endowment},
  volume={13},
  number={12},
  pages={3152--3165},
  year={2020},
  publisher={VLDB Endowment}
}

@article{malkov2018efficient,
  title={Efficient and robust approximate nearest neighbor search using hierarchical navigable small world graphs},
  author={Malkov, Yu A and Yashunin, Dmitry A},
  journal={IEEE transactions on pattern analysis and machine intelligence},
  volume={42},
  number={4},
  pages={824--836},
  year={2018},
  publisher={IEEE}
}

@article{johnson2019billion,
  title={Billion-scale similarity search with GPUs},
  author={Johnson, Jeff and Douze, Matthijs and J{\'e}gou, Herv{\'e}},
  journal={IEEE Transactions on Big Data},
  volume={7},
  number={3},
  pages={535--547},
  year={2019},
  publisher={IEEE}
}

@inproceedings{pizzato2010recon,
  title={RECON: a reciprocal recommender for online dating},
  author={Pizzato, Luiz and Rej, Tomek and Chung, Thomas and Koprinska, Irena and Kay, Judy},
  booktitle={Proceedings of the fourth ACM conference on Recommender systems},
  pages={207--214},
  year={2010}
}

@article{xia2016design,
  title={Design of Reciprocal Recommendation Systems for Online Dating},
  author={Xia, Peng and Zhai, Shuangfei and Liu, Benyuan and Sun, Yizhou and Chen, Cindy},
  journal={Social Network Analysis and Mining},
  volume={6},
  number={1},
  pages={1--17},
  year={2016}
}

@inproceedings{zheng2023reseq,
  title={Reciprocal Sequential Recommendation},
  author={Zheng, Bowen and Hou, Yupeng and Zhao, Wayne Xin and Song, Yang and Zhu, Hengshu},
  booktitle={Proceedings of the 17th ACM Conference on Recommender Systems (RecSys)},
  year={2023},
  pages={---},
  doi={10.1145/3604915.3608798}
}

@inproceedings{lai2024knowledge,
  title={Knowledge-Aware Explainable Reciprocal Recommendation},
  author={Lai, Kai-Huang and Yang, Zhe-Rui and Lai, Pei-Yuan and Wang, Chang-Dong and Guizani, Mohsen and Chen, Min},
  booktitle={Proceedings of the AAAI Conference on Artificial Intelligence},
  volume={38},
  number={8},
  pages={8636--8644},
  year={2024},
  doi={10.1609/aaai.v38i8.28708}
}

@misc{redisvectordb,
  title={Redis as a Vector Database},
  author={Redis, Inc.},
  howpublished={\url{https://redis.io/docs/latest/develop/get-started/vector-database/}}
}

@misc{pgvector2021,
  title={pgvector: Open-Source Vector Similarity Search Extension for PostgreSQL},
  author={Kane, Andrew},
  howpublished={\url{https://github.com/pgvector/pgvector}},
  year={2021}
}

@misc{puck2023,
  title={PUCK: A High-Performance Approximate Nearest Neighbor Search Library},
  author={Baidu Inc.},
  howpublished={\url{https://github.com/baidu/puck}},
  year={2023}
}

@inproceedings{www2023hicr,
  title     = {Confident Action Decision via Hierarchical Policy Learning for Conversational Recommendation},
  author    = {Kim, Heeseon and Yang, Hyeongjun and Lee, Kyong Ho},
  booktitle = {Proceedings of the ACM Web Conference 2023 (WWW '23)},
  pages     = {1386--1395},
  year      = {2023},
  publisher = {ACM},
  address   = {Austin, TX, USA},
  doi       = {10.1145/3543507.3583536}
}

@inproceedings{lin2023personalization,
  title     = {Enhancing User Personalization in Conversational Recommenders},
  author    = {Lin, Allen and Zhu, Ziwei and Wang, Jianling and Caverlee, James},
  booktitle = {Proceedings of the ACM Web Conference 2023 (WWW '23)},
  year      = {2023},
  note      = {Extended version: arXiv:2302.06656},
  url       = {https://arxiv.org/abs/2302.06656},
  doi       = {10.48550/arXiv.2302.06656}
}

@inproceedings{kemper2024ragconvrec,
  title     = {Retrieval-Augmented Conversational Recommendation with Prompt-based Semi-Structured Natural Language State Tracking},
  author    = {Kemper, Sara and Cui, Justin and Dicarlantonio, Kai and Lin, Kathy and Tang, Danjie and Korikov, Anton and Sanner, Scott},
  booktitle = {Proceedings of the 47th International ACM SIGIR Conference on Research and Development in Information Retrieval (SIGIR '24)},
  address   = {Washington, DC, USA},
  pages     = {1--5},
  year      = {2024},
  publisher = {ACM},
  doi       = {10.1145/3626772.3657670},
  url       = {https://ssanner.github.io/papers/sigir24_llmrec.pdf}
}

@inproceedings{chen2025scope,
  title     = {Broaden your SCOPE! Efficient Multi-turn Conversation Planning for LLMs with Semantic Space},
  author    = {Chen, Zhiliang and Niu, Xinyuan and Foo, Chuan-Sheng and Low, Bryan Kian Hsiang},
  booktitle = {International Conference on Learning Representations (ICLR '25), Spotlight},
  year      = {2025},
  url       = {https://openreview.net/forum?id=3cgMU3TyyE}
}

@inproceedings{zhang2024empirical,
  title={An empirical analysis on multi-turn conversational recommender systems},
  author={Zhang, Lu and Li, Chen and Lei, Yu and Sun, Zhu and Liu, Guanfeng},
  booktitle={Proceedings of the 47th International ACM SIGIR Conference on Research and Development in Information Retrieval},
  pages={841--851},
  year={2024}
}

@article{xiong2023can,
  title={Can llms express their uncertainty? an empirical evaluation of confidence elicitation in llms},
  author={Xiong, Miao and Hu, Zhiyuan and Lu, Xinyang and Li, Yifei and Fu, Jie and He, Junxian and Hooi, Bryan},
  journal={arXiv preprint arXiv:2306.13063},
  year={2023}
}

@article{tian2023just,
  title={Just ask for calibration: Strategies for eliciting calibrated confidence scores from language models fine-tuned with human feedback},
  author={Tian, Katherine and Mitchell, Eric and Zhou, Allan and Sharma, Archit and Rafailov, Rafael and Yao, Huaxiu and Finn, Chelsea and Manning, Christopher D},
  journal={arXiv preprint arXiv:2305.14975},
  year={2023}
}

@inproceedings{zhang2024probing,
  title={Probing the Multi-turn Planning Capabilities of LLMs via 20 Question Games},
  author={Zhang, Yizhe and Lu, Jiarui and Jaitly, Navdeep},
  booktitle={Proceedings of the 62nd Annual Meeting of the Association for Computational Linguistics (Volume 1: Long Papers)},
  pages={1495--1516},
  year={2024}
}

@inproceedings{wang2024mint,
  title={MINT: EVALUATING LLMS IN MULTI-TURN INTERACTION WITH TOOLS AND LANGUAGE FEEDBACK},
  author={Wang, Xingyao and Wang, Zihan and Liu, Jiateng and Chen, Yangyi and Yuan, Lifan and Peng, Hao and Ji, Heng},
  booktitle={12th International Conference on Learning Representations, ICLR 2024},
  year={2024}
}

@inproceedings{sun2025mockllm,
  title={MockLLM: A Multi-Agent Behavior Collaboration Framework for Online Job Seeking and Recruiting},
  author={Sun, Hongda and Lin, Hongzhan and Yan, Haiyu and Song, Yang and Gao, Xin and Yan, Rui},
  booktitle={Proceedings of the 31st ACM SIGKDD Conference on Knowledge Discovery and Data Mining V. 2},
  pages={2714--2724},
  year={2025}
}

@inproceedings{lieliciting,
  title={Eliciting Human Preferences with Language Models},
  author={Li, Belinda Z and Tamkin, Alex and Goodman, Noah and Andreas, Jacob},
  booktitle={The Thirteenth International Conference on Learning Representations},
  year={2025}
}

@inproceedings{gionis1999similarity,
  title={Similarity search in high dimensions via hashing},
  author={Gionis, Aristides and Indyk, Piotr and Motwani, Rajeev and others},
  booktitle={Vldb},
  volume={99},
  number={6},
  pages={518--529},
  year={1999}
}

@inproceedings{tao2009quality,
  title={Quality and efficiency in high dimensional nearest neighbor search},
  author={Tao, Yufei and Yi, Ke and Sheng, Cheng and Kalnis, Panos},
  booktitle={Proceedings of the 2009 ACM SIGMOD International Conference on Management of data},
  pages={563--576},
  year={2009}
}

@article{shrivastava2014asymmetric,
  title={Asymmetric LSH (ALSH) for sublinear time maximum inner product search (MIPS)},
  author={Shrivastava, Anshumali and Li, Ping},
  journal={Advances in neural information processing systems},
  volume={27},
  year={2014}
}

@inproceedings{neyshabur2015symmetric,
  title={On symmetric and asymmetric lshs for inner product search},
  author={Neyshabur, Behnam and Srebro, Nathan},
  booktitle={International Conference on Machine Learning},
  pages={1926--1934},
  year={2015},
  organization={PMLR}
}

@inproceedings{tian2022db,
  title={DB-LSH: Locality-Sensitive Hashing with Query-based Dynamic Bucketing},
  author={Tian, Yao and Zhao, Xi and Zhou, Xiaofang},
  booktitle={2022 IEEE 38th International Conference on Data Engineering (ICDE)},
  pages={2250--2262},
  year={2022},
  organization={IEEE}
}

@inproceedings{zhao2024efficient,
  title={Efficient Approximate Maximum Inner Product Search Over Sparse Vectors},
  author={Zhao, Xi and Chen, Zhonghan and Huang, Kai and Zhang, Ruiyuan and Zheng, Bolong and Zhou, Xiaofang},
  booktitle={2024 IEEE 40th International Conference on Data Engineering (ICDE)},
  pages={3961--3974},
  year={2024},
  organization={IEEE}
}

@inproceedings{zhao2020song,
  title={Song: Approximate nearest neighbor search on gpu},
  author={Zhao, Weijie and Tan, Shulong and Li, Ping},
  booktitle={2020 IEEE 36th International Conference on Data Engineering (ICDE)},
  pages={1033--1044},
  year={2020},
  organization={IEEE}
}

@article{jegou2010product,
  title={Product quantization for nearest neighbor search},
  author={Jegou, Herve and Douze, Matthijs and Schmid, Cordelia},
  journal={IEEE transactions on pattern analysis and machine intelligence},
  volume={33},
  number={1},
  pages={117--128},
  year={2010},
  publisher={IEEE}
}

@article{gong2012iterative,
  title={Iterative quantization: A procrustean approach to learning binary codes for large-scale image retrieval},
  author={Gong, Yunchao and Lazebnik, Svetlana and Gordo, Albert and Perronnin, Florent},
  journal={IEEE transactions on pattern analysis and machine intelligence},
  volume={35},
  number={12},
  pages={2916--2929},
  year={2012},
  publisher={IEEE}
}

@inproceedings{ootomo2024cagra,
  title={Cagra: Highly parallel graph construction and approximate nearest neighbor search for gpus},
  author={Ootomo, Hiroyuki and Naruse, Akira and Nolet, Corey and Wang, Ray and Feher, Tamas and Wang, Yong},
  booktitle={2024 IEEE 40th International Conference on Data Engineering (ICDE)},
  pages={4236--4247},
  year={2024},
  organization={IEEE}
}

@inproceedings{song2024efficient,
  title={Efficient Reverse $ k $ Approximate Nearest Neighbor Search Over High-Dimensional Vectors},
  author={Song, Yitong and Wang, Kai and Yao, Bin and Chen, Zhida and Xie, Jiong and Li, Feifei},
  booktitle={2024 IEEE 40th International Conference on Data Engineering (ICDE)},
  pages={4262--4274},
  year={2024},
  organization={IEEE}
}

@inproceedings{luo2025tag,
  title={Tag-Filtered Approximate Nearest Neighbor Search},
  author={Luo, Jiarui and Qiao, Miao and Zuo, Chaoji and Deng, Dong},
  booktitle={2025 IEEE 41st International Conference on Data Engineering (ICDE)},
  pages={3642--3654},
  year={2025},
  organization={IEEE}
}

@article{xie2025fast,
  title={Fast Approximate Similarity Join in Vector Databases},
  author={Xie, Jiadong and Yu, Jeffrey Xu and Liu, Yingfan},
  journal={Proceedings of the ACM on Management of Data},
  volume={3},
  number={3},
  pages={1--26},
  year={2025},
  publisher={ACM New York, NY, USA}
}
